\documentclass{article}

\usepackage{arxiv}

\usepackage{latexsym,amsmath,amssymb,amscd,bm,amsfonts}
\usepackage{algorithm}
\usepackage[noend]{algpseudocode}
\usepackage{graphicx}
\usepackage{textcomp}
\usepackage{xcolor}
\usepackage{blindtext}
\usepackage[hidelinks]{hyperref}
\usepackage{subcaption}
\usepackage{multirow}
\usepackage{booktabs}
\usepackage{amsthm}

\usepackage{makecell}
\usepackage{tikz}
\usetikzlibrary{positioning, shapes, arrows.meta, fit, backgrounds, calc}

\newtheorem{thm}{Theorem}[section]
\newtheorem{lem}[thm]{Lemma}

\theoremstyle{definition}

\newtheorem{rem}{Remark}

\newcommand*{\mylemmaref}[2][Lemma~]{\hyperref[{#2}]{#1\ref*{#2}}}
\newcommand*{\mycorref}[2][Corollary~]{\hyperref[{#2}]{#1\ref*{#2}}}
\newcommand*{\myeqref}[2][Equation~]{\hyperref[{#2}]{#1(\ref*{#2})}}
\def\equationautorefname#1#2\null{Equation#1(#2\null)}

\newcommand{\bestvalue}[1]{\textcolor{blue}{\textbf{#1}}}

\title{Layer-Specific Lipschitz Modulation for Fault-Tolerant Multimodal Representation Learning}

\author{ \href{https://orcid.org/0009-0005-7928-5874}{\includegraphics[scale=0.06]{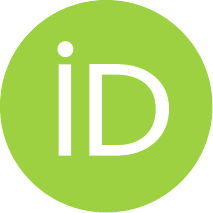}\hspace{1mm}Diyar Altinses} and \href{https://orcid.org/0000-0001-8405-0977}{\includegraphics[scale=0.06]{orcid.pdf}\hspace{1mm}Andreas Schwung}\thanks{This Article is funded by the Open Access Publication Fund of South Westphalia University of Applied Sciences.\\
Submitted to \textit{Information Fusion}} \\
	Department of Automation Technology and Learning Systems\\
	South Westphalia University of Applied Sciences\\
	Soest, Germany \\
	\texttt{altinses.diyar@fh-swf.de},  \texttt{schwung.andreas@fh-swf.de}\\
}

\renewcommand{\shorttitle}{Layer-Specific Lipschitz Modulation}

\hypersetup{
pdftitle={Layer-Specific Lipschitz Modulation for Fault-Tolerant Multimodal Representation Learning},
pdfsubject={cs.AI, cs.LG},
pdfauthor={Diyar Altinses, Andreas Schwung},
pdfkeywords={Fault-tolerant multimodal learning, self-supervised representation learning, Lipschitz regularization, anomaly detection and correction, robustness analysis},
}

\begin{document}
\maketitle

\begin{abstract}
	Modern multimodal systems deployed in industrial and safety-critical environments must remain reliable under partial sensor failures, signal degradation, or cross-modal inconsistencies. This work introduces a mathematically grounded framework for fault-tolerant multimodal representation learning that unifies self-supervised anomaly detection and error correction within a single architecture. Building upon a theoretical analysis of perturbation propagation, we derive Lipschitz- and Jacobian-based criteria that determine whether a neural operator amplifies or attenuates localized faults. Guided by this theory, we propose a two-stage self-supervised training scheme: pre-training a multimodal convolutional autoencoder on clean data to preserve localized anomaly signals in the latent space, and expanding it with a learnable compute block composed of dense layers for correction and contrastive objectives for anomaly identification. Furthermore, we introduce layer-specific Lipschitz modulation and gradient clipping as principled mechanisms to control sensitivity across detection and correction modules. Experimental results on multimodal fault datasets demonstrate that the proposed approach improves both anomaly detection accuracy and reconstruction under sensor corruption. Overall, this framework bridges the gap between analytical robustness guarantees and practical fault-tolerant multimodal learning.
\end{abstract}

\keywords{Fault-tolerant multimodal learning \and self-supervised representation learning \and Lipschitz regularization \and anomaly detection and correction \and robustness analysis}

\section{Introduction}

Unplanned downtime represents a critical failure mode and a major source of operational inefficiency across industrial sectors \cite{rahman2014assessment, diez2019data}. This unscheduled termination of production leads to substantial, multi-faceted economic losses, consistently ranking as a primary financial burden. These costs are derived fundamentally from three main categories: lost production volume, which directly impacts revenue, the significant expense of emergency maintenance and specialized repairs, and harmful effects on supply chain stability and contractual obligations \cite{wang2024survey}. 

Traditional strategies for mitigating system failure have historically relied heavily on hardware redundancy. This approach, while effective against component-specific malfunctions like random failures or wear-out, offers limited protection against common-cause failures \cite{alizadeh2017reliability}. These failures originate from shared environmental effects, such as excessive heat, high humidity, or mechanical vibration \cite{jones2025common}. When these conditions impact multiple components simultaneously, even redundant ones, the safety margin is eliminated \cite{altinses2023deep}. 

A modern approach uses multimodal machine learning to overcome the limitations of traditional, single-sensor diagnostic methods and redundancy. Machine learning-based methodologies enable the capability to fuse diverse, heterogeneous sensor data, sourced from spatially distributed and often disparate information streams, into one general model \cite{ali2025multimodal}. This fusion process integrates variables such as temporal sensor readings and camera data, resulting in a richer and more accurate estimation of the current state. Crucially, this advanced diagnostic capability enables compensating for the weakness of one modality with the strength of another. This creates an inherently fault-tolerant and adaptive production system \cite{altinses2025enhancing}.

A complete fault tolerance mechanism fundamentally requires a sequential three-stage process: fault injection, detection, and correction (or recovery). Existing research and industrial implementations often exhibit a fragmented focus, addressing these stages individually rather than as an integrated system \cite{zhang2025distributed, altinses2025fault}. While some state-of-the-art approaches attempt to incorporate all three steps, they frequently encounter a critical performance trade-off: improvements in the correction mechanism often lead to a degradation in the detection stage \cite{altinses2025fault}. 

Building upon the identified limitations, this research presents a theoretical analysis of how various perturbations impact different neural network layer structures. We extend this analysis by investigating the dynamic behavior of neural network sensitivity during the optimization. Informed by these theoretical insights, we propose a self-supervised, dual-regularized deep learning architecture. This novel approach is specifically designed to address the detection-correction trade-off, aiming to increase network sensitivity for robust anomaly detection while simultaneously reducing sensitivity for a stable and accurate correction process. This integrated framework represents a significant advancement toward highly efficient industrial fault tolerance. The contributions of this paper are summarized as follows:

\begin{enumerate}
    \item We present a theoretical framework analyzing the effect of diverse operational perturbations on the structural and dynamic sensitivity of deep neural network architectures.
    \item We developed, based on theory, a dual regularization mechanism that selectively increases sensitivity for anomaly detection tasks while simultaneously decreasing it for the fault correction process, effectively resolving the detection-correction performance trade-off.
    \item We validated the proposed approach on three multimodal heterogeneous industrial datasets, demonstrating a unified fault-tolerance system that achieves superior performance compared to existing fragmented methodologies.
\end{enumerate}

The remainder of this paper is structured to systematically present our research and findings. \autoref{sec:rel} provides a detailed overview of the related work, focusing on advances in multimodal self-supervised learning and existing fault-tolerant deep learning methodologies. Following this, \autoref{sec:pertub} establishes the foundational theoretical analysis concerning perturbation effects and the dynamic sensitivity of neural networks. The practical application of these theoretical insights is presented in \autoref{sec:method}, which introduces and fully details our proposed self-supervised, dual-path, and dual-regularized model. \autoref{sec:eval} then validates our approach through a comprehensive experimental evaluation and discussion of results. Finally, \autoref{sec:conclusion} summarizes the key findings of this work and offers concluding remarks.


\section{Related Work}
\label{sec:rel}

Research on robust multimodal representation learning and fault-tolerant neural architectures has evolved along several directions. Three particularly relevant fields related to our approach are: multimodal self-supervised learning, fault-tolerant multimodal learning, and Lipschitz-based sensitivity regulation. The following subsections review the most pertinent literature within each domain.

\subsection{Multimodal Self-Supervised Learning}

Multimodal self-supervised learning has matured rapidly, driven by architectures and objectives that exploit cross-view (or cross-modal) correspondences to learn rich joint representations without manual labels. Early audio-visual methods established the utility of cross-modal signals: Arandjelović and Zisserman showed that audio-visual correspondence yields strong cross-modal embeddings and localization cues \cite{arandjelovic2017look} and later extended this to object-sound grounding \cite{arandjelovic2018objects}. Temporal synchronization and audio–visual temporal self-supervision were shown to be effective for learning temporally aligned features from video \cite{korbar2018cooperative,morgado2020learning}.

Contrastive and multiview objectives generalized these ideas. Tian et al. propose the contrastive multiview coding, which includes a principled noise-contrastive objective for arbitrary sets of views and demonstrated strong transfer performance \cite{tian2020contrastive}. Deep multimodal frameworks have extended contrastive training to multiple modalities in video, showing that versatile multimodal representations can be learned end-to-end \cite{xiang2025integrating}. Transformer-based multimodal pretraining further scaled self-supervision to raw audio, video, and text with contrastive/fusion objectives \cite{akbari2021vatt}\cite{chen2025pretraining}.

In vision–language modeling, researchers demonstrate that two-stream and fusion transformers with large-scale pretraining, masked modeling, and matching objectives yield powerful joint embeddings for downstream tasks \cite{lu2019vilbert,tan2019lxmert,din2025multimodal}. More recently, Radford et al. showed that large contrastive language–image models, that pair images with natural-language descriptions, provide a scalable source of multimodal supervision that transfers broadly \cite{radford2021learning}.

These lines of work provide the foundations for our approach. However, the literature largely focuses on representation quality and transfer. Comparatively little work studies fault-tolerance and latent correction mechanisms in a principled, Lipschitz-aware manner. We close this gap by proposing a two-stage, self-supervised pipeline that combines contrastive pretraining (to preserve anomaly salience) with latent-space correction under Lipschitz-controlled regularization to enable both detection and correction.

\subsection{Fault-tolerant Multimodal Learning}

Fault tolerance in multimodal learning sits at the intersection of robust fusion, missing-modality inference, and anomaly/resilience mechanisms. Several recent works have studied the robustness properties of multimodal models and proposed architectures and training schemes to mitigate the effects of partial or corrupted modalities. Ma et al. conduct a systematic study of Transformer-based multimodal models and show that fusion strategy and model design critically affect robustness to missing modalities \cite{ma2022multimodal}. Complementary approaches explicitly model missing modalities via imagination or imputation networks that predict absent modality embeddings from available views \cite{zhao2021missing}.

Several recent papers propose principled strategies for robust fusion. Representation decoupling reduces interference across modality combinations, thereby improving robustness under incomplete inputs \cite{wei2024robust}. Wang et al. explicitly separate common and modality-unique latent factors to enhance resilience to missing or corrupted inputs \cite{wang2024decoupling}. Other works design modality reliability scoring or uncertainty-aware fusion to down-weight unreliable streams and improve end-to-end robustness \cite{gao2024embracing, hong2023watch}.

A growing literature focuses on the problem of incomplete multimodal learning under uncertain missingness, offering distributional and generative remedies to recover missing modalities or construct robust joint representations \cite{lan2024robust, li2024toward}. Contrastive and sampling-based strategies have also been adapted to create modality-agnostic latent spaces that are less sensitive to perturbations \cite{wei2024robust, lin2023missmodal}.

A significant portion of recent literature focuses on the performance and robustness of multimodal autoencoders. One technique is the fuzzy regularization approach, which employs a specialized regularizer to amplify the magnitudes of correct features while actively attenuating corrupted ones \cite{altinses2023deep}. Further elaborating on the latent space, researchers in \cite{altinses2025enhancing} proposed extending the multimodal autoencoders to a variational version, implementing a probabilistic fusion mechanism to better model uncertainty and variability during the combination of latent representations. Concurrently, other work has concentrated on dynamic feature selection, exemplified by the concept fusion developed in \cite{altinses2025fault}. This block enables the network to learn selective integration, dynamically choosing between joint and marginal information from the modalities.

Despite progress, most methods evaluate robustness empirically. Few provide principled, layerwise, or Lipschitz-aware design rules that jointly address anomaly localization and correction in latent space. Our work fills this gap by linking convolutional locality to preserved perturbation salience, proposing linear compute blocks to disperse and correct latent-encoded faults, and introducing a layer-specific Lipschitz regularization strategy that explicitly trades off detectability and correctability.

\subsection{Lipschitz-based Sensitivity Regulation}

Research on regulating model sensitivity via Lipschitz continuity and Jacobian control has grown rapidly, driven by concerns of robustness, generalization, and responsible model behavior. Early work on Lipschitz-constrained networks showed that bounding the operator norm of weight matrices improves adversarial robustness and calibration \cite{anil2019sorting}. Miyato et al. introduced Jacobian regularization for semi-supervised learning to improve smoothness and generalization \cite{miyato2018virtual}.  
Spectral normalization appeared as a practical mechanism to bound layer Lipschitz constants in GANs and classification networks \cite{miyato2018spectral}. Fazlyab et al. analysed contraction-based stability in feedback systems via Lipschitz bounds and offered semidefinite relaxations for certified robustness \cite{fazlyab2023certified}.

More recently, Jacobian‐based penalties have been applied to large-scale pretraining: Chen et al. demonstrated that minimizing the norm of the Jacobian improves out-of-distribution robustness in massive transformer models \cite{chen2023understanding}. On the generative modelling front, Rhodes et al. proposed controlling the Lipschitz to balance sample quality and latent stability \cite{rhodes2021local}. From a theoretical standpoint, Lewandowski et al. derived generalization bounds in deep learning via bounds on the spectral norm of the Jacobian of the network mapping \cite{lewandowski2024learning}. Moreover, Chen et al. established a convergence guarantee for gradient-clipped SGD by relating clipping thresholds to implicit Lipschitz constraints on the network and loss \cite{chen2020understanding}. Finally, Altinses et al. provided a theoretical analysis of the Lipschitz continuity associated with standard fusion strategies in multimodal autoencoders, highlighting the stability limitations of fusion methods. Building on these gradient dynamics, they proposed a regularized attention-based fusion mechanism that explicitly constrains the Lipschitz constant to ensure robust training stability in industrial applications \cite{altinses2025stabilizing}.

These works provide the foundation for our layer-specific Lipschitz modulation strategy: we adapt Lipschitz regularization not uniformly but selectively across layers with distinct roles (encoder vs. compute block) to simultaneously support anomaly localization via higher sensitivity and error correction via contraction.


\section{Propagation of Localized Perturbations}
\label{sec:pertub}

In multimodal learning architectures, distinct modalities such as image data and sensor data are often processed via different neural network components. Typically, convolutional layers are employed for image modalities to leverage their inherent spatial structure, whereas fully connected (dense) layers are used for sensor modalities due to their tabular nature. Understanding how localized perturbations, such as faults or anomalies injected into the input, propagate through these differing architectures is critical for interpreting model robustness and the effectiveness of fault detection mechanisms. Intuitively, convolutional layers tend to preserve spatial locality by restricting the impact of perturbations to a small subset of output features, whereas dense layers distribute perturbations across all output dimensions, potentially diluting localized signals. In this section, we rigorously formalize and prove this intuition for the case of multiple corrupted input entries. 

\subsection{Additive Perturbation Model}

Additive perturbations are the most fundamental and widely used abstraction for modeling noise and local faults in both theoretical analysis and engineering practice. They correspond to the intuitive notion that a clean signal $x$ is affected by an independent disturbance $\delta$, which may arise from sensor errors, environmental noise, or transmission corruption. 
In this setting, the observed input is written as
\begin{align}
    \tilde{x} = x + \delta,
\end{align}
where the perturbation vector $\delta$ has support $\operatorname{supp}(\delta) = S\subseteq\{1,\dots,N\}$, representing the (possibly small) subset of coordinates where the fault actually occurs.

Having established the additive perturbation framework, we now examine how such perturbations propagate through the two most common linear mappings in deep learning: fully connected (dense) layers and convolutional layers. Dense layers form the backbone of many sensor-processing or tabular-data subsystems and are characterized by their global receptive field: each output neuron aggregates contributions from every input coordinate. Consequently, any local perturbation in the input potentially affects all output dimensions. Our goal is to make this intuition mathematically explicit by quantifying the expected perturbation energy at the layer output under random weights. Therefore, we begin with the dense case.

\begin{lem}
\label{lem:dense_add}
    Let $W\in\mathbb{R}^{M\times N}$ have independent entries $W_{j,i}$ with $\mathbb{E}[W_{j,i}]=0$ and $\operatorname{Var}[W_{j,i}]=\sigma_W^2$. Let $\delta$ be supported on $S$. Then the expected squared output perturbation satisfies
    \[
    \mathbb{E}\big[\|W\delta\|_2^2\big] \;=\; M\,\sigma_W^2\,\|\delta_S\|_2^2.
    \]
\end{lem}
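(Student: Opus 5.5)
The plan is a direct second-moment computation, treating $\delta$ as a fixed (deterministic) vector independent of $W$. Since $\delta_i=0$ for $i\notin S$, each output coordinate is
\[
(W\delta)_j=\sum_{i\in S} W_{j,i}\,\delta_i,\qquad j=1,\dots,M,
\]
so only the columns of $W$ indexed by $S$ contribute. This is where the support assumption enters: it replaces $\delta$ by $\delta_S$ throughout.

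Next, expand the squared norm as $\|W\delta\|_2^2=\sum_{j=1}^M (W\delta)_j^2$ and use linearity of expectation to reduce to a single row. For fixed $j$,
\[
\mathbb{E}\big[(W\delta)_j^2\big]=\sum_{i,k\in S}\delta_i\delta_k\,\mathbb{E}[W_{j,i}W_{j,k}].
\]
For $i\neq k$, independence and zero mean give $\mathbb{E}[W_{j,i}W_{j,k}]=\mathbb{E}[W_{j,i}]\,\mathbb{E}[W_{j,k}]=0$. For $i=k$, we have $\mathbb{E}[W_{j,i}^2]=\operatorname{Var}[W_{j,i}]+(\mathbb{E}[W_{j,i}])^2=\sigma_W^2$. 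Hence each row contributes $\sigma_W^2\sum_{i\in S}\delta_i^2=\sigma_W^2\|\delta_S\|_2^2$.

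Summing over the $M$ rows then yields $M\sigma_W^2\|\delta_S\|_2^2$, as claimed. Equivalently, one can write $\mathbb{E}[W^\top W]=M\sigma_W^2 I_N$ and evaluate $\delta^\top\mathbb{E}[W^\top W]\delta$.

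There is no substantial obstacle. The only points needing care are that the cross terms vanish, which needs only pairwise uncorrelatedness within each row (zero mean plus independence), and that $\delta$ is deterministic or independent of $W$. If $\delta$ were random, I would condition on $\delta$ and then apply the tower property.
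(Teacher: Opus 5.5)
Your proof is correct and is essentially the paper's argument: the paper computes $\mathbb{E}[W^\top W]=M\sigma_W^2 I_N$ entrywise, using the same zero-mean and independence cancellation of cross terms, and then evaluates $\delta^\top\mathbb{E}[W^\top W]\delta$. That is exactly the equivalent form you mention, and your row-by-row computation just sums the same double sum in the other order. Your remark that $\delta$ must be deterministic, or independent of $W$ and handled by conditioning, makes explicit an assumption the paper leaves implicit.
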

\begin{proof}
    Write
    \[
    \|W\delta\|_2^2 = \delta^\top W^\top W \delta = \sum_{i,i'} \delta_i \delta_{i'} (W^\top W)_{i,i'}.
    \]
    Take expectations. For the matrix $W^\top W$ the entries are
    \[
    (W^\top W)_{i,i'} = \sum_{j=1}^M W_{j,i} W_{j,i'}.
    \]
    Thus
    \[
    \mathbb{E}[(W^\top W)_{i,i'}] = \sum_{j=1}^M \mathbb{E}[W_{j,i} W_{j,i'}].
    \]
    Because entries in distinct columns are independent and zero-mean, for $i\neq i'$ we have $\mathbb{E}[W_{j,i}W_{j,i'}]=\mathbb{E}[W_{j,i}]\mathbb{E}[W_{j,i'}]=0$. For $i=i'$,
    $\mathbb{E}[W_{j,i}^2] = \operatorname{Var}[W_{j,i}] = \sigma_W^2$. Hence
    \[
    \mathbb{E}[W^\top W] = M\sigma_W^2 I_N.
    \]
    Therefore
    \[
    \mathbb{E}\|W\delta\|_2^2 = \delta^\top \mathbb{E}[W^\top W]\delta = \delta^\top (M\sigma_W^2 I)\delta = M\sigma_W^2 \|\delta\|_2^2,
    \]
    and restricting to the support $S$ yields the stated formula.
\end{proof}

This result shows that each input perturbation component contributes equally (in expectation) to every output coordinate. The total perturbation energy grows linearly with the number of outputs $M$, but the per-output expected energy equals $\sigma_W^2\|\delta_S\|_2^2$, which is independent of $M$. Thus, a localized input perturbation appears as many small perturbations across outputs, a global spread which reduces per-coordinate salience. Note that this analysis primarily applies to the initial state (He/Xavier Initialization) or under the assumption of centered data/weights, which is often approximated by batch normalization.

While \mylemmaref{lem:dense_add} reveals that dense layers act as global mixing operators that eliminate the spatial structure of additive perturbations, convolutional layers behave fundamentally differently. Their defining property is weight sharing combined with local receptive fields, meaning that each output neuron only depends on a small neighborhood of input coordinates. This architectural locality strongly constrains how far a perturbation can spread through the layer. Therefore, to contrast with the global propagation observed in the dense case, we now analyze the expected perturbation energy in a convolutional layer under the same additive perturbation model. Therefore, we model a convolutional layer $C\in\mathbb{R}^{M'\times N}$ as a sparse matrix where each column $C_{\cdot, i}$ has nonzero entries only at output indices in $S_i$ (the receptive field positions for input index $i$); assume $|S_i|\le K$. Nonzero entries (kernel positions) are independent, zero-mean, and have the variance $\sigma_k^2$.

\begin{lem}
\label{lem:conv_add}
    Under the stated model,
    \[
    \mathbb{E}\big[\|C\delta\|_2^2\big] \;=\; \sigma_k^2 \sum_{i\in S} |S_i|\,\delta_i^2
    \;\le\; \sigma_k^2\, rK \,\|\delta_S\|_2^2,
    \]
    where $r=|S|$ and $s:=\sum_{i\in S}|S_i|\le rK$ is the multiplicity-corrected number of affected outputs.
\end{lem}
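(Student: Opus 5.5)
The plan is to mirror the proof of \mylemmaref{lem:dense_add}, replacing the dense Gram matrix by the sparse one. Write
\[
\|C\delta\|_2^2 = \delta^\top C^\top C\,\delta = \sum_{i,i'\in S}\delta_i\delta_{i'}\,(C^\top C)_{i,i'},
\qquad (C^\top C)_{i,i'}=\sum_{j=1}^{M'} C_{j,i}C_{j,i'},
\]
where only indices in $S$ appear because $\delta$ vanishes off its support. For the diagonal terms, only $j\in S_i$ contributes, and each such entry is a kernel position with variance $\sigma_k^2$. This gives $\mathbb{E}[(C^\top C)_{i,i}]=|S_i|\,\sigma_k^2$.

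The main obstacle is the off-diagonal terms. Because of weight sharing, $C_{j,i}$ and $C_{j,i'}$ for $i\neq i'$ with $j\in S_i\cap S_{i'}$ are always \emph{different} kernel taps (distinct offsets $j-i\neq j-i'$). However, the same tap reappears in other rows, so the entries of $C$ are not globally independent. I will therefore read the stated model as saying that distinct kernel positions are independent zero-mean random variables. Under that reading, $\mathbb{E}[C_{j,i}C_{j,i'}]=\mathbb{E}[C_{j,i}]\,\mathbb{E}[C_{j,i'}]=0$ whenever $i\neq i'$, since the two factors are distinct independent taps. If $j\notin S_i\cap S_{i'}$, the product vanishes identically. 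Either way every off-diagonal expectation is zero, so $\mathbb{E}[C^\top C]$ is diagonal with entries $\sigma_k^2|S_i|$. I will state this independence-of-distinct-taps reading explicitly, noting that it is exactly what the model assumption provides and the only place weight sharing enters.

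Combining the two cases gives the equality
\[
\mathbb{E}\|C\delta\|_2^2=\sum_{i\in S}\delta_i^2\,\mathbb{E}[(C^\top C)_{i,i}]=\sigma_k^2\sum_{i\in S}|S_i|\,\delta_i^2 .
\]
For the inequality, I use $|S_i|\le K$ to bound the sum by $\sigma_k^2K\sum_{i\in S}\delta_i^2=\sigma_k^2K\|\delta_S\|_2^2$. This already implies the weaker stated bound $\sigma_k^2 rK\|\delta_S\|_2^2$, since $r\ge 1$ whenever $\delta\neq 0$; the case $\delta=0$ is trivial. Finally, I note that $s=\sum_{i\in S}|S_i|\le rK$ counts affected outputs with multiplicity, and that the energy is concentrated on at most $s$ coordinates, in contrast to the $M$ coordinates of the dense case.
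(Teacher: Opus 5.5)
Your proposal is correct and follows the paper's proof essentially step for step. Both expand $\delta^\top C^\top C\,\delta$, show that $\mathbb{E}[C^\top C]$ is diagonal with entries $\sigma_k^2|S_i|$, and then bound the result via $|S_i|\le K\le rK$. Your added observation, that within a fixed row distinct columns always correspond to distinct kernel taps, slightly sharpens the paper's justification: the paper simply excludes shared parameters by assuming all nonzero entries are independent, whereas your argument shows the off-diagonal terms vanish even under genuine weight sharing, as long as distinct taps are independent and zero-mean.
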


\begin{proof}
    As before,
    \[
    \|C\delta\|_2^2 = \delta^\top C^\top C \delta = \sum_{i,i'} \delta_i \delta_{i'} (C^\top C)_{i,i'},
    \]
    and
    \[
    (C^\top C)_{i,i'} = \sum_{j=1}^{M'} C_{j,i} C_{j,i'}.
    \]
    For fixed $j$, independence and zero-mean of the nonzero kernel entries imply $\mathbb{E}[C_{j,i} C_{j,i'}]=0$ when $i\neq i'$ (unless by model design both entries correspond to the same kernel parameter, which we excluded by independence). For $i=i'$ and $j\in S_i$, $\mathbb{E}[C_{j,i}^2]=\sigma_k^2$; for $j\not\in S_i$, $C_{j,i}=0$. Therefore
    \[
    \mathbb{E}[(C^\top C)_{i,i'}] = \begin{cases}
    \sigma_k^2 |S_i|, & i=i',\\
    0, & i\ne i'.
    \end{cases}
    \]
    Hence
    \[
    \mathbb{E}\|C\delta\|_2^2 = \sum_{i\in S} \delta_i^2 \sigma_k^2 |S_i|,
    \]
    \[
    \sum_{i\in S} \delta_i^2 \sigma_k^2 |S_i| \le \sigma_k^2 \big(\max_{i\in S}|S_i|\big) \sum_{i\in S} \delta_i^2 \le \sigma_k^2 rK \|\delta_S\|_2^2,
    \]
    which proves the lemma.
\end{proof}

Unlike the dense case, each perturbed input coordinate $i$ affects only $|S_i|$ outputs. Thus, the total affected output count $s$ scales with $rK$ and may be much smaller than the dense output dimension $M$. The convolution preserves spatial locality and keeps the perturbation energy concentrated into a bounded neighborhood instead of being globally spread.

\subsection{Multiplicative perturbations}

While the additive model provides a mathematically convenient and widely applicable framework for studying noise propagation, it assumes that perturbations are independent of the underlying signal. In many real-world systems, however, this assumption does not hold: the magnitude or direction of a disturbance often depends on the local signal amplitude itself. Typical examples include gain fluctuations in sensors, feature-dependent dropout, fading in communication channels, and multiplicative calibration errors in imaging systems. To capture such effects, we now turn to the multiplicative perturbation model, which introduces a signal-dependent scaling of each input dimension.

Formally, we represent the perturbed input as
\begin{align}
    \tilde{x} = x \odot (1 + \epsilon),
\end{align}
where \(\odot\) denotes the elementwise (Hadamard) product and \(\epsilon \in \mathbb{R}^N\) is a random or deterministic perturbation vector supported on a subset \(S \subseteq \{1, \dots, N\}\). For linear layers, this is precisely an additive perturbation with $\delta = \operatorname{diag}(x)\varepsilon$. Equivalently, we can express the perturbation as
\begin{align}
    \tilde{x} = x + (x \odot \epsilon) = x + \delta,
\end{align}
so that the effective additive perturbation is \(\delta_i = x_i \epsilon_i\). This formulation highlights that the multiplicative model generalizes the additive one: the additive case corresponds to \(\epsilon_i = \delta_i / x_i\) for nonzero \(x_i\), i.e., perturbations that are independent of the input magnitude.

\begin{lem}
\label{lem:dense_mul}
    Let $W$ as in Lemma \ref{lem:dense_add} and let $\varepsilon$ be supported on $S$. Then
    \[
    \mathbb{E}\big[\|W(\operatorname{diag}(x)\varepsilon)\|_2^2\big]
    = M\,\sigma_W^2 \sum_{i\in S} x_i^2 \varepsilon_i^2.
    \]
\end{lem}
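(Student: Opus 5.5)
The plan is to reduce the statement directly to \mylemmaref{lem:dense_add} by treating the multiplicative perturbation as an additive one with a fixed perturbation vector. Set $\delta := \operatorname{diag}(x)\varepsilon$, so that $\delta_i = x_i\varepsilon_i$. Since $\varepsilon$ is supported on $S$, every $i\notin S$ has $\delta_i = 0$, and therefore $\operatorname{supp}(\delta)\subseteq S$. Some $x_i$ with $i\in S$ may vanish, which can make the support strictly smaller than $S$. This does not matter, because the formula in \mylemmaref{lem:dense_add} depends only on $\|\delta_S\|_2^2$, and zero coordinates contribute nothing to that norm.

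The key step is to check that the hypotheses of \mylemmaref{lem:dense_add} still hold. That lemma treats $\delta$ as deterministic, or at least independent of $W$, with the expectation taken over $W$ only. Here $x$ is the clean input and $\varepsilon$ is the fault vector. Both are either deterministic or, if random, independent of the weights $W$. I will argue by conditioning on $(x,\varepsilon)$. Conditionally, $\delta$ is a fixed vector, so \mylemmaref{lem:dense_add} gives
\[
\mathbb{E}\big[\|W\delta\|_2^2 \,\big|\, x,\varepsilon\big] = M\sigma_W^2\|\delta_S\|_2^2 .
\]
Expanding $\|\delta_S\|_2^2 = \sum_{i\in S}\delta_i^2 = \sum_{i\in S} x_i^2\varepsilon_i^2$ then gives exactly the claimed identity. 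If $x$ and $\varepsilon$ are taken as fixed, as the statement implicitly does, no outer expectation is needed. If they are random, taking the outer expectation gives the averaged version $M\sigma_W^2\sum_{i\in S}\mathbb{E}[x_i^2\varepsilon_i^2]$.

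The only point needing care is this independence between $W$ and $(x,\varepsilon)$. The identity $\mathbb{E}[W^\top W]=M\sigma_W^2 I_N$ can be pulled through the quadratic form $\delta^\top W^\top W\delta$ only if $\delta$ does not depend on $W$. I will state this explicitly as part of the model: the fault acts on the input before the layer, and the weights are drawn independently, as at initialization. No other obstacle arises, because the multiplicative structure enters only through the diagonal rescaling $\operatorname{diag}(x)$, and this rescaling commutes with restricting to the support $S$.
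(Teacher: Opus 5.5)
Your proposal is correct and is the same argument as the paper's: substitute $\delta=\operatorname{diag}(x)\varepsilon$ and apply \mylemmaref{lem:dense_add}. Your extra remarks are careful refinements that the paper leaves implicit, namely that the support may shrink where $x_i=0$ and that $(x,\varepsilon)$ must be independent of $W$, handled by conditioning.
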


\begin{proof}
    Set $\delta := \operatorname{diag}(x)\varepsilon$, i.e.\ $\delta_i = x_i \varepsilon_i$. Apply \mylemmaref{lem:dense_add} to $\delta$. The result follows directly.
\end{proof}

\begin{lem}
\label{lem:conv_mul}
    Let $C$ as in Lemma \ref{lem:conv_add} and $\varepsilon$ supported on $S$. Then
    \[
    \mathbb{E}\big[\|C(\operatorname{diag}(x)\varepsilon)\|_2^2\big]
    = \sigma_k^2 \sum_{i\in S} |S_i|\, x_i^2 \varepsilon_i^2
    \le \sigma_k^2\, rK \sum_{i\in S} x_i^2 \varepsilon_i^2.
    \]
\end{lem}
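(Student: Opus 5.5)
The plan mirrors the proof of \mylemmaref{lem:dense_mul}: reduce the multiplicative statement to the additive one already established in \mylemmaref{lem:conv_add}. I would set $\delta := \operatorname{diag}(x)\varepsilon$, so that $\delta_i = x_i\varepsilon_i$. The first thing to note is that $\delta$ is a deterministic vector with respect to the randomness of $C$, since $x$ and $\varepsilon$ are treated as fixed (or, if $\varepsilon$ is random, as independent of $C$). This is the only hypothesis \mylemmaref{lem:conv_add} needs. Next, because $\varepsilon$ is supported on $S$, we have $\operatorname{supp}(\delta)\subseteq S$. Coordinates in $S$ where $x_i=0$ contribute $\delta_i^2 = 0$, so they can be kept in the sum without changing anything.

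Applying \mylemmaref{lem:conv_add} to this $\delta$ then gives
\[
\mathbb{E}\big[\|C\delta\|_2^2\big] = \sigma_k^2\sum_{i\in S}|S_i|\,\delta_i^2 = \sigma_k^2\sum_{i\in S}|S_i|\,x_i^2\varepsilon_i^2 ,
\]
which is the equality claimed in the statement.

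For the inequality, I would bound $|S_i|\le K$ termwise, which gives the factor $\sigma_k^2 K\sum_{i\in S} x_i^2\varepsilon_i^2$. Since $r=|S|\ge 1$ whenever $S$ is nonempty, this is at most $\sigma_k^2 rK\sum_{i\in S}x_i^2\varepsilon_i^2$. If $S$ is empty, both sides are zero and the bound holds trivially. This reproduces the (loose) constant $rK$ used in \mylemmaref{lem:conv_add}.

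I do not expect a genuine obstacle here. The only point needing care is the independence and conditioning remark: if $\varepsilon$ (or $x$) is random, one should first condition on $(x,\varepsilon)$, apply \mylemmaref{lem:conv_add} conditionally, and then read the statement as a conditional expectation, or take the outer expectation on both sides.
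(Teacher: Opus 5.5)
Your proposal is correct and follows the paper's proof exactly: substitute $\delta=\operatorname{diag}(x)\varepsilon$ and invoke Lemma~\ref{lem:conv_add}. Your extra remarks are sound details that the paper leaves implicit: $\operatorname{supp}(\delta)\subseteq S$ with the zero terms harmless, the termwise bound $|S_i|\le K\le rK$, and conditioning on $(x,\varepsilon)$ when they are random.
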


\begin{proof}
    Same substitution: put $\delta = \operatorname{diag}(x)\varepsilon$ and apply \mylemmaref{lem:conv_add}.
\end{proof}

Multiplicative perturbations scale with the local signal amplitude $x_i$: areas with larger $x_i$ amplify relative errors $\varepsilon_i$ into bigger absolute distortions. Structurally, convolution still confines these amplified distortions to small neighborhoods, whereas dense layers spread the signal-dependent distortion across all outputs.

\subsection{Comparison of Propagation Dynamics}

Having established the separate propagation characteristics of additive and multiplicative perturbations through dense and convolutional layers, we now synthesize these results into a unified comparative analysis. While the individual lemmas quantified the expected perturbation energy in isolation, a direct comparison reveals the fundamental structural difference between global and local mappings. The following theorem formalizes this distinction by comparing the expected perturbation energies in both architectures under equivalent statistical assumptions. The subsequent corollary interprets these findings in terms of architectural design principles, linking the theoretical analysis to practical implications for anomaly detection and fault correction.

\begin{thm}
\label{thm:comparison}
    Under the assumptions of \mylemmaref{lem:dense_add} and \mylemmaref{lem:conv_add} and their multiplicative analogues, the expected squared output energies satisfy
    \[
    \mathbb{E}\big[\|W\delta\|_2^2\big] = M\sigma_W^2 \|\delta_S\|_2^2,
    \qquad
    \mathbb{E}\big[\|C\delta\|_2^2\big] = \sigma_k^2 \sum_{i\in S} |S_i| \delta_i^2.
    \]
    Define $s = \sum_{i\in S}|S_i|\le rK$. If $\sigma_W^2\approx\sigma_k^2$ and $s\ll M$, then the convolutional output concentrates the perturbation energy into far fewer coordinates than the dense output. Consequently, the average per-affected-output expected energy is larger for the convolutional mapping:
    \[
    \frac{\mathbb{E}\|C\delta\|_2^2}{s} \ge \frac{\mathbb{E}\|W\delta\|_2^2}{M},
    \]
    under $\sigma_k^2=\sigma_W^2$ and if the $\delta_i^2$ are distributed appropriately.
\end{thm}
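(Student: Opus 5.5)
The two displayed identities need no new argument. They are exactly \mylemmaref{lem:dense_add} and \mylemmaref{lem:conv_add}. The multiplicative versions follow from \mylemmaref{lem:dense_mul} and \mylemmaref{lem:conv_mul} after substituting $\delta=\operatorname{diag}(x)\varepsilon$, so from here on I treat a generic $\delta$ supported on $S$. The real content is the per-output inequality, and the plan is to reduce it to a comparison of two weighted averages of the $\delta_i^2$.

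\textbf{Reduction.} Set $\sigma^2:=\sigma_k^2=\sigma_W^2$. Dividing the two identities by $s$ and $M$ respectively, the claim becomes
\[
\frac{1}{s}\sum_{i\in S}|S_i|\,\delta_i^2 \;\ge\; \frac{1}{M}\sum_{i\in S}\delta_i^2 .
\]
The left side is a $|S_i|$-weighted average of the $\delta_i^2$, since the weights $|S_i|/s$ sum to one over $S$. The right side equals $\tfrac{r}{M}$ times the plain average $\bar\delta^2:=\tfrac1r\sum_{i\in S}\delta_i^2$.

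\textbf{Uniform receptive fields.} First take $|S_i|=K$ for all $i\in S$, which is the interior case of a convolution. Then $s=rK$, and the left side is exactly $\bar\delta^2$. The inequality reduces to $\bar\delta^2\ge \tfrac{r}{M}\bar\delta^2$, which holds precisely when $r\le M$. This is automatic in the meaningful regime: localized faults satisfy $r\le N$, and the output dimension is at least comparable to the support. Indeed $s\ll M$ already gives $r\le s\ll M$, since $|S_i|\ge 1$ for every input coordinate that reaches some output.

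\textbf{General fields, the main obstacle.} Here the hypothesis ``the $\delta_i^2$ are distributed appropriately'' has to be made precise, and I expect this to be the hard step. The weighted average can fall below the plain average when large perturbations sit at border positions with small $|S_i|$. I would make the condition explicit as follows. By Chebyshev's sum inequality, the weighted average is at least $\bar\delta^2$ whenever $|S_i|$ and $\delta_i^2$ are similarly ordered, or simply uncorrelated over $S$. If no such ordering is available, a cruder sufficient bound is
\[
\frac1s\sum_{i\in S}|S_i|\delta_i^2 \;\ge\; \frac{\min_i|S_i|}{\max_i|S_i|}\,\bar\delta^2 ,
\]
which exceeds $\tfrac rM\bar\delta^2$ as soon as $\tfrac{rK}{M}\le \min_i|S_i|$. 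That condition is implied by $s\ll M$ together with $\min_i|S_i|\ge1$.

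I would state these as the admissible distributional assumptions and conclude the inequality. As a side remark, the convolutional per-output energy is then larger than the dense one by a factor of order $M/r$. This factor quantifies the locality gain.
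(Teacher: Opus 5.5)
Your reduction drops a factor of $M$. Dividing the dense identity by $M$ gives $\mathbb{E}\|W\delta\|_2^2/M=\sigma^2\|\delta_S\|_2^2$, not $\sigma^2\|\delta_S\|_2^2/M$. Indeed, each single dense output $(W\delta)_j=\sum_{i\in S}W_{j,i}\delta_i$ already has second moment $\sigma^2\|\delta_S\|_2^2$. With the correct right-hand side the claim to prove is
\[
\frac{1}{s}\sum_{i\in S}|S_i|\,\delta_i^2\;\ge\;\sum_{i\in S}\delta_i^2 ,
\]
and this cannot be proved. As you observe, the left side is a convex combination of the $\delta_i^2$, hence at most $\max_{i\in S}\delta_i^2$. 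Every $\delta_i$ with $i\in S=\operatorname{supp}(\delta)$ is nonzero, so the right side strictly exceeds $\max_{i\in S}\delta_i^2$ once $r\ge 2$. Under $\sigma_k^2=\sigma_W^2$ the displayed inequality therefore holds only for $r=1$ (with equality). It is strictly reversed for every $\delta$ with $r\ge 2$, however the $\delta_i^2$ are distributed and however small $s/M$ is.

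For uniform receptive fields the ratio of the two sides is exactly $1/r$, not a gain of order $M/r$. Your criterion $r\le M$, the Chebyshev condition and the $\min/\max$ bound all address an inequality weaker by a factor $M$ than the one stated. So the ``main obstacle'' you identify is not the real one: no distributional assumption on the $\delta_i^2$ makes the statement true.

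The paper's own proof has the same blind spot. It argues only that the denominator $s$ is much smaller than $M$. It ignores that the convolutional numerator $\sigma_k^2\sum_{i\in S}|S_i|\delta_i^2$ is smaller than $M\sigma_W^2\|\delta_S\|_2^2$ by at least as large a factor. Neither argument establishes the per-output inequality.

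What the two lemmas do support is the locality claim. The convolutional perturbation $C\delta=\sum_{i\in S}\delta_i C_{\cdot,i}$ lives on at most $s\le rK$ output coordinates. By contrast, each of the $M$ dense outputs carries expected energy $\sigma_W^2\|\delta_S\|_2^2$.

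An energy comparison in the claimed direction requires changing the hypothesis. One option is to compare at equal total output energy, in which case the ratio of per-affected-output energies is $M/s\ge 1$. Another, for uniform fields, is that the inequality holds exactly when $\sigma_k^2\ge r\,\sigma_W^2$. That regime is closer to fan-in-scaled initializations than $\sigma_k^2=\sigma_W^2$.
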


\begin{proof}
    The equalities are restatements of \mylemmaref{lem:dense_add} and \mylemmaref{lem:conv_add}. Dividing both sides by the number of affected outputs yields the per-output averages. If $\sigma_k^2=\sigma_W^2$ and $s\ll M$, then the denominator on the left is much smaller, so the typical per-output energy is larger for the convolutional mapping.
\end{proof}

Under the conditions of \autoref{thm:comparison}, suppose $s \ll M$ and the variances $\sigma_W^2$ and $\sigma_k^2$ are of comparable order. Then, for a fixed perturbation energy $\|\delta_S\|_2^2$, the convolutional layer concentrates this energy into a significantly smaller number of output coordinates than the dense layer. This concentration implies that, on average, the perturbation amplitude per affected output coordinate is higher for the convolutional layer, thereby increasing the salience of the anomaly signal in latent representations and enhancing its detectability.


\begin{rem}
    The independence and disjoint-support assumptions in \mylemmaref{lem:dense_add} and \mylemmaref{lem:conv_add} are idealized. In practical convolutional networks, receptive fields often overlap, so $S_i \cap S_{i'} \neq \emptyset$ for distinct perturbed indices $i$ and $i'$, and the corresponding weight entries may not be perfectly independent (e.g., due to learned structure after training). Overlapping supports introduce additional cross terms in the expectation, which can either increase or decrease the total output perturbation energy depending on the correlation pattern. Nonetheless, as long as the effective number of affected outputs $s$ remains much smaller than $M$, the qualitative conclusion of Theorem 1, that convolutional mappings localize perturbations relative to dense mappings, still holds. Quantifying the precise effect in the correlated and overlapping case requires extending the analysis to account for the joint covariance structure of the weights and receptive fields.
\end{rem}

This difference in perturbation propagation has practical implications for fault and anomaly detection in multimodal systems. The localized preservation of perturbation magnitude in convolutional networks enables more precise identification and localization of faults in image data, as the perturbations remain distinct and spatially confined in the latent representations. In contrast, dense layers used for 1-dimensional data tend to diffuse such perturbations throughout the latent space, potentially making fault signals less distinguishable. This means that architectural choices, specifically the use of convolutional versus dense layers, play a fundamental role in how faults manifest in latent representations. For effective multimodal fault detection, understanding and leveraging these propagation characteristics can guide the design of network components and training strategies that enhance anomaly sensitivity and robustness.

\subsection{Training Dynamics and Noise-Dependent Lipschitz Sensitivity}
\label{sec:gradclip}

The previous analysis demonstrated that the architectural choice determines the spatial support of perturbation propagation. However, the magnitude of this propagation and the model's overall stability depend not only on the connectivity pattern but also on the global sensitivity of the mapping $f_\theta$. While our structural analysis was conducted in the static case with fixed parameters, in practice, these parameters evolve during optimization. Therefore, it is essential to understand how the training dynamics interact with the model's sensitivity, and how these sensitivities can be optimized to either stabilize learning for correction or amplify discriminative features for anomaly detection. 

The measure of sensitivity is the Lipschitz constant of the network function with respect to its input. Let $f_\theta:\mathbb{R}^n\rightarrow\mathbb{R}^m$ be a neural network parameterized by weights $\theta$. We denote by $J_f(x)=\frac{\partial f_\theta(x)}{\partial x}$ the Jacobian of $f_\theta$ with respect to its input $x$.
The global Lipschitz constant is then defined as
\begin{align}
    L_f = \sup_{x\neq y} \frac{\left\|f_\theta(x)-f_\theta(y)\right\|_2}{\|x-y\|_2},   
\end{align}
which measures the maximal amplification of input perturbations through the network.

The target is to understand how training regimes and input noise modify $L_f$. This provides a quantitative measure of how stable or sensitive the network is, and thus how it may be guided by gradient clipping, Jacobian regularization, or noise-based training strategies. The following \autoref{thm:lipschitz_clean} presents the clean input Lipschitz bound.

\begin{thm}
\label{thm:lipschitz_clean}
    Let $f_\theta(x)$ be an $L$-layer feed-forward network $f_\theta(x) = W_L\phi_{L-1}( W_{L-1}\phi_{L-2}(\dots \phi_1(W_1 x)\dots))$ with weight matrices $W_\ell$ and elementwise activation functions $\phi_\ell$ that are $K_\phi$-Lipschitz continuous. Then, for all $x,y\in\mathbb{R}^n$, it holds that
    \[
        \|f_\theta(x)-f_\theta(y)\|_2 \le \Big(\prod_{\ell=1}^{L} \|W_\ell\|_2 K_\phi\Big)\, \|x-y\|_2.
    \]
    Hence, the global Lipschitz constant satisfies
    \[
        L_f^{(\mathrm{clean})} \le K_\phi^L \prod_{\ell=1}^{L} \|W_\ell\|_2.
    \]
\end{thm}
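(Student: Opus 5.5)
The plan is to prove the bound by induction on depth, using the two elementary facts the composition is built from. First, a linear map $z\mapsto W_\ell z$ is Lipschitz with constant exactly $\|W_\ell\|_2$, since $\|W_\ell z - W_\ell z'\|_2 = \|W_\ell(z-z')\|_2 \le \|W_\ell\|_2\|z-z'\|_2$ by the definition of the induced spectral norm. Second, an elementwise activation $\phi_\ell$ whose scalar component is $K_\phi$-Lipschitz is $K_\phi$-Lipschitz in $\|\cdot\|_2$ on vectors. This follows from $\|\phi_\ell(u)-\phi_\ell(v)\|_2^2=\sum_k |\phi_\ell(u_k)-\phi_\ell(v_k)|^2 \le K_\phi^2 \sum_k |u_k-v_k|^2$. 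Lipschitz constants multiply under composition, so the whole argument reduces to chaining these two facts.

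Concretely, I would define the intermediate pre-activations $h_1(x)=W_1x$ and $h_{\ell+1}(x)=W_{\ell+1}\phi_\ell(h_\ell(x))$ for $\ell=1,\dots,L-1$, so that $f_\theta=h_L$. The induction hypothesis is
\[
\|h_\ell(x)-h_\ell(y)\|_2 \le K_\phi^{\ell-1}\Big(\prod_{m=1}^{\ell}\|W_m\|_2\Big)\|x-y\|_2 .
\]
The base case $\ell=1$ is the linear fact. The inductive step applies the activation fact and then the linear fact to pass from $h_\ell$ to $h_{\ell+1}$, which contributes one factor of $K_\phi$ and one of $\|W_{\ell+1}\|_2$. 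At $\ell=L$ this gives the constant $K_\phi^{L-1}\prod_{\ell=1}^L\|W_\ell\|_2$. Taking the supremum over $x\neq y$ in the definition of $L_f$ then yields the bound on $L_f^{(\mathrm{clean})}$.

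The only real obstacle is bookkeeping: the stated architecture has $L$ weight matrices but only $L-1$ activations, so the natural bound carries $K_\phi^{L-1}$ rather than $K_\phi^{L}$. I would reconcile this by noting that the output layer can be regarded as followed by the identity activation. Under the standing convention $K_\phi\ge 1$, which holds for ReLU, leaky ReLU, tanh and the other usual activations with $K_\phi=1$ or larger, we have $K_\phi^{L-1}\le K_\phi^{L}$, and the stated form $\prod_\ell \|W_\ell\|_2K_\phi$ follows. If one does not wish to assume $K_\phi\ge1$, I would instead state the sharper bound with exponent $L-1$, which implies the displayed one whenever $K_\phi\ge 1$. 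No earlier lemma of the paper is needed, since the argument is purely deterministic and does not use the random-weight results of \mylemmaref{lem:dense_add} or \mylemmaref{lem:conv_add}.
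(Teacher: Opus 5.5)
Your proposal is correct and follows the same route as the paper: the paper's proof also chains the per-layer bound $\|W_\ell\|_2K_\phi$ (the operator norm for the linear map and $K_\phi$ for the elementwise activation) recursively over the $L$ layers. Your version is more careful. The paper attaches a factor $K_\phi$ to the first layer even though no activation precedes $W_1$, so its $K_\phi^{L}$ really requires $K_\phi\ge 1$, as you note; for $K_\phi<1$ the displayed bound can fail (e.g.\ $L=2$, $W_1=W_2=I$, $\phi_1(t)=t/2$ gives Lipschitz constant $1/2>1/4$), so your $K_\phi^{L-1}$ bound is the right statement in general.
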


\begin{proof}
The proof follows directly from the submultiplicativity of operator norms and the chain rule for composition of Lipschitz functions. For each layer $h_\ell(x)=W_\ell \phi_{\ell-1}(\cdot)$, we have
\[
    \|h_\ell(x_1)-h_\ell(x_2)\|_2 \le \|W_\ell\|_2 K_\phi \|x_1-x_2\|_2.
\]
Applying this inequality recursively over $L$ layers yields the stated bound.
\end{proof}

\begin{rem}
    Connection to Architecture: The product $\prod \|W_\ell\|_2$ in \autoref{thm:lipschitz_clean} highlights the synergy with the findings in \autoref{sec:pertub}. While convolutional layers restrict the Jacobian's sparsity structure, keeping the spectral norm $\|W_\ell\|_2$ low ensures that the active paths do not amplify the signal excessively. Thus, robustness is a dual objective: structural localization via architecture and spectral control via training.
\end{rem}

This theorem describes the static sensitivity of the network to clean inputs. The Lipschitz constant grows multiplicatively with layer depth and the spectral norms of the weights. If each layer’s operator norm is tightly controlled, e.g., using spectral normalization or weight decay, the overall sensitivity $L_f$ remains small, implying stable propagation of input differences. Conversely, large layer norms or strong activations lead to high $L_f$, which makes the network highly sensitive to small input changes, which is useful for detection tasks but detrimental to correction stability.

We now consider the case where the input is corrupted by additive noise, $x' = x + \delta$, with $\delta \sim \mathcal{N}(0,\Sigma)$ or $\|\delta\|\le \epsilon$. The following \autoref{thm:lipschitz_noise} shows how this noise affects the Lipschitz constant and the effective sensitivity of the network.

\begin{thm}
\label{thm:lipschitz_noise}
    Under the same assumptions as \autoref{thm:lipschitz_clean}, let $x' = x + \delta$ and assume the Jacobian of $f_\theta$ is locally $L_J$-Lipschitz, i.e.
    \[
        \|J_f(x) - J_f(y)\|_2 \le L_J \|x-y\|_2 \quad \forall x,y.
    \]
    Then, the difference between the noisy and clean network outputs satisfies
    \[
        \|f_\theta(x+\delta) - f_\theta(x)\|_2 \le L_f^{(\mathrm{clean})}\|\delta\|_2 + \tfrac{1}{2} L_J \|\delta\|_2^2.
    \]
    Consequently, the effective Lipschitz constant in the presence of input noise is
    \[
        L_f^{(\mathrm{noisy})} \le L_f^{(\mathrm{clean})} + \tfrac{1}{2} L_J \|\delta\|_2.
    \]
\end{thm}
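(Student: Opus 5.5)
The plan is a first-order Taylor expansion with an integral remainder along the segment from $x$ to $x+\delta$. This needs $f_\theta$ to be differentiable along that segment. That holds if the $\phi_\ell$ are $C^1$. For piecewise-linear activations such as ReLU, I will read the hypothesis that $J_f$ is $L_J$-Lipschitz as implicitly excluding kinks, or equivalently apply the argument to a smooth surrogate, since a Lipschitz Jacobian already forces continuity of $J_f$.

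Set $g(t)=f_\theta(x+t\delta)$ for $t\in[0,1]$. By the fundamental theorem of calculus,
\[
f_\theta(x+\delta)-f_\theta(x)=\int_0^1 J_f(x+t\delta)\,\delta\,dt
= J_f(x)\delta+\int_0^1\big(J_f(x+t\delta)-J_f(x)\big)\delta\,dt .
\]

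The two terms are bounded separately.

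\textbf{First term.} We have $\|J_f(x)\delta\|_2\le \|J_f(x)\|_2\|\delta\|_2$, and $\|J_f(x)\|_2\le L_f^{(\mathrm{clean})}$. The latter holds because at any point of differentiability the operator norm of the Jacobian is at most the global Lipschitz constant: take $y=x+h v$ with $h\to 0$ in the definition of $L_f$. By \autoref{thm:lipschitz_clean}, this constant is in turn controlled by $K_\phi^L\prod_\ell\|W_\ell\|_2$.

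\textbf{Second term.} The Lipschitz hypothesis on the Jacobian gives $\|J_f(x+t\delta)-J_f(x)\|_2\le L_J t\|\delta\|_2$. Integrating $t$ over $[0,1]$ gives the factor $\tfrac12$, so this term is at most $\tfrac12 L_J\|\delta\|_2^2$.

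Adding the two estimates via the triangle inequality yields the displayed bound.

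For the ``consequently'' statement, I will interpret $L_f^{(\mathrm{noisy})}$ as the effective amplification ratio $\|f_\theta(x+\delta)-f_\theta(x)\|_2/\|\delta\|_2$ for the given noise realization, assuming $\delta\neq0$. Dividing the previous inequality by $\|\delta\|_2$ then gives $L_f^{(\mathrm{noisy})}\le L_f^{(\mathrm{clean})}+\tfrac12 L_J\|\delta\|_2$.

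In the bounded-noise case $\|\delta\|_2\le\epsilon$, I will take the supremum over such $\delta$ to get a uniform bound with $\epsilon$ in place of $\|\delta\|_2$. In the Gaussian case the statement holds pathwise, and one can optionally take expectations, using $\mathbb{E}\|\delta\|_2\le\sqrt{\operatorname{tr}\Sigma}$.

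The main obstacle is not the computation but making the smoothness and interpretation precise. I have to justify that $J_f$ exists along the segment so that the integral representation is valid. I also have to be explicit that $L_f^{(\mathrm{noisy})}$ is a local, realization-dependent sensitivity rather than a new global Lipschitz constant. The global constant of $f_\theta$ itself does not change with the input distribution, so the quadratic correction only has meaning for this local ratio.
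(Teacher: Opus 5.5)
Your proposal is correct and follows the same route as the paper. Both use the first-order expansion $f_\theta(x+\delta)=f_\theta(x)+J_f(x)\delta+R(x,\delta)$, bound $\|J_f(x)\delta\|_2\le L_f^{(\mathrm{clean})}\|\delta\|_2$, bound the remainder by $\tfrac{1}{2}L_J\|\delta\|_2^2$ via Lipschitz continuity of the Jacobian, and then divide by $\|\delta\|_2$. Your integral form of the remainder and your argument that $\|J_f(x)\|_2$ is at most the global Lipschitz constant make explicit two steps the paper uses without comment; likewise, the paper's ``extending by $x$'' is just the harmless supremum over $x$ of your realization-wise ratio, whose bound does not depend on $x$.
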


\begin{proof}
Using the second-order Taylor expansion around $x$:
\[
    f(x+\delta) = f(x) + J_f(x)\delta + R(x,\delta),
\]
where the remainder $R(x,\delta)$ satisfies 
$\|R(x,\delta)\|_2 \le \tfrac{1}{2}L_J\|\delta\|_2^2$ 
by Lipschitz continuity of the Jacobian.
Taking norms and applying the bound for $J_f(x)$ gives
\begin{align*}
    \|f(x+\delta)-f(x)\|_2 
    &\le \|J_f(x)\|\|\delta\|_2 + \tfrac{1}{2}L_J\|\delta\|_2^2\\
    \|f(x+\delta)-f(x)\|_2
    &\le L_f^{(\mathrm{clean})}\|\delta\|_2 + \tfrac{1}{2}L_J\|\delta\|_2^2.
\end{align*}
Dividing by $\|\delta\|_2$ and extending by $x$ yields:
\begin{align*}
    \frac{\|f(x+\delta)-f(x)\|_2}{\|\delta\|_2} \le L_f^{(\mathrm{clean})} + \tfrac{1}{2}L_J\|\delta\|_2,
\end{align*}
and therefore the effective Lipschitz constant under noisy input.
\end{proof}

This result quantifies how input noise modifies the effective sensitivity of the network. The clean Lipschitz constant $L_f^{(\mathrm{clean})}$ acts as the baseline, while the additional term $\tfrac{1}{2}L_J\|\delta\|_2$ represents the curvature-induced sensitivity due to local nonlinearity. Hence, as the input noise amplitude $\|\delta\|_2$ increases, the local Lipschitz constant grows linearly, indicating that noise amplifies sensitivity in regions where the Jacobian varies rapidly.

Practically, this has two crucial implications:
\begin{enumerate}
    \item For correction or denoising tasks: one should minimize both $L_f^{(\mathrm{clean})}$ and $L_J$. This ensures that noise does not destabilize training and that reconstruction remains smooth.
    \item For anomaly detection: A controlled increase of $L_J$ in critical layers can enhance sensitivity to small perturbations, making anomalous deviations more detectable. However, excessive curvature can make training unstable and lead to gradient explosion.
\end{enumerate}

Thus, by controlling $L_f^{(\mathrm{clean})}$ and $L_J$ through appropriate training strategies, one can design networks that either suppress or amplify the propagation of noise, a fundamental mechanism behind robust correction versus sensitive detection architectures.


\section{Multimodal alignment-based failure correction}
\label{sec:method}

The mathematical results from \autoref{thm:comparison} directly translate into architectural design. For effective anomaly detection, the perturbation must remain localized. Convolutional, graph-based, or local-attention layers are ideal for this purpose, as they maintain the geometric or topological coherence of anomalies. For correction or robust averaging, the opposite property is beneficial: spreading the perturbation reduces local intensity and allows recovery by averaging. Dense or strongly mixing architectures act as perturbation diffusers, approximating an isotropic error field where localized anomalies are removed. In the following sections, we introduce our proposed fault-tolerant two-staged self-supervised approach presented in \autoref{fig:overview}. 

\begin{figure*}
    \centering
    \includegraphics[width=\linewidth]{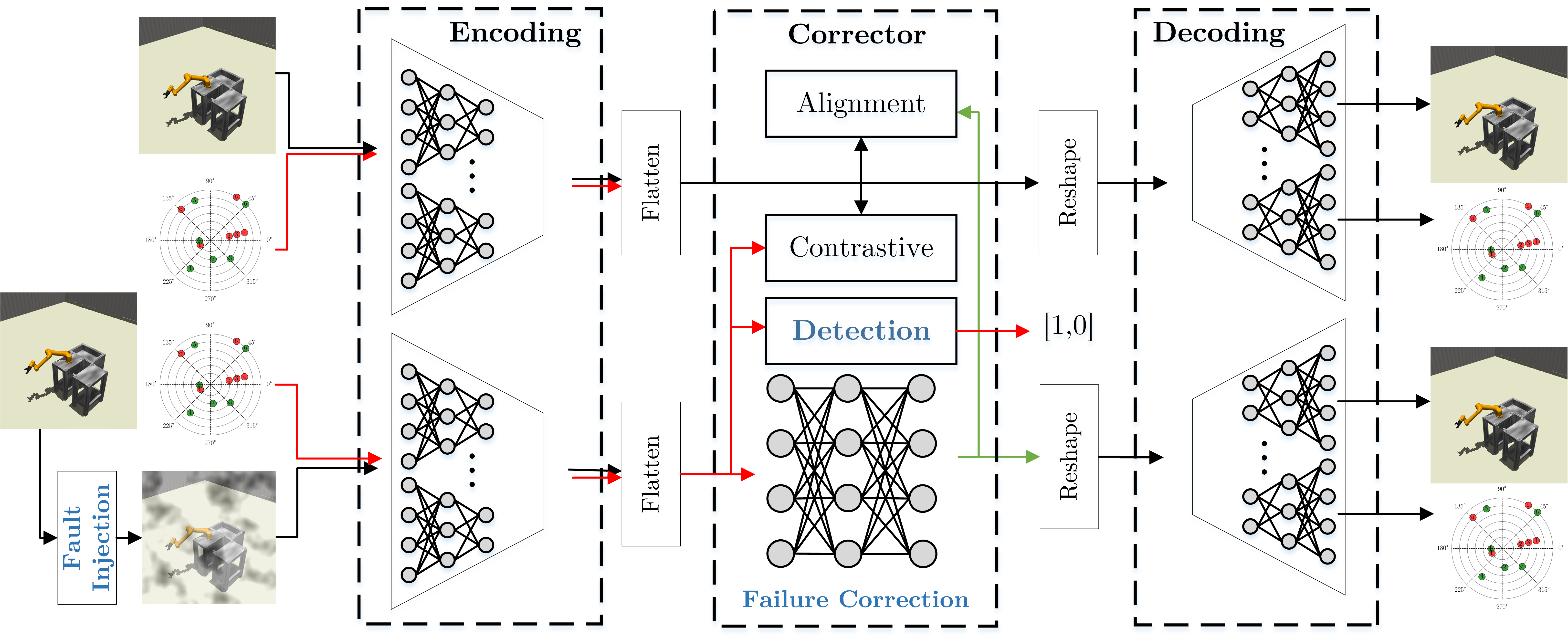}
    \caption{Overview of the proposed fault-tolerant approach for robotic systems. The architecture includes encoding, correction with contrastive learning and detection modules, and decoding stages to reconstruct the original signal.}
    \label{fig:overview}
\end{figure*}

\subsection{Problem Definition}

Let $\mathcal{X} = [\mathcal{X}_i]_{i=1}^M$ denote the multimodal input space, where each modality $\mathcal{X}_i$ corresponds to a distinct sensory or data domain. The model consists of modality-specific encoders $E_i: \mathcal{X}_i \to \mathbb{R}^{z_i}$ and decoders $D_i: \mathbb{R}^{z_i} \to \mathcal{X}_i$, which map the latent spaces back to the input. The latent codes are aggregated into a joint representation via a learnable fusion operator 
\begin{align}
    C: \biguplus_{i=1}^M \mathbb{R}^{z_i} \to \mathbb{R}^z, \quad \tilde{z} = C([E_i(x_i)]_{i=1}^M),
\end{align}
which is designed to detect latent inconsistencies and correct modality-specific faults.

To model corruptions, each modality may undergo a stochastic perturbation $f^{(i)}:\mathcal{X}_i \to \mathcal{X}_i$ such that $x_i^{\mathrm{fail}} = f^{(i)}(x_i, \delta_i)$, where $\delta_i$ represents the disturbance. The objective is to learn parameters $(\theta_i,\phi_i,\psi)$ of $(E_i, D_i, C)$ minimizing the expected multimodal reconstruction loss under corrupted observations, while enforcing contrastive consistency in the latent space:
\begin{align}
    \min_{\theta,\phi,\psi} \mathbb{E}_{x,\delta}\Big[\sum_{i=1}^M &\|x_i - D_i(\tilde{z})\|_2^2 \nonumber \\
    &+ \lambda_1 \mathcal{L}_{\mathrm{con}}(E(x),E(f(x))) \\
    &+ \lambda_2 \mathcal{L}_{\mathrm{sim}}(C(E(f(x))),E(x))
    \Big]. \nonumber
\end{align}
Here, $\mathcal{L}_{\mathrm{con}}$ enforces separation between clean and corrupted representations for anomaly detection, whereas $\mathcal{L}_{\mathrm{sim}}$ promotes post-correction alignment for reconstruction consistency.

\subsection{Pre-trained Multimodal Autoencoder}

Our proposed method in \autoref{fig:overview} is a two-stage framework that utilizes a pre-trained multimodal autoencoder consisting of encoder $E$ and decoder $D$, a corrector ($f(z) \approx f(z + \epsilon)$) for robust data correction, and an anomaly detection. The core components are defined as follows:
\begin{enumerate}
    \item Encoder $E: \mathbb{R}^N \to \mathbb{R}^d$: Maps the high-dimensional input space $\mathbb{R}^N$ to a low-dimensional latent space $\mathbb{R}^d$.
    \item Decoder ($D:\mathbb{R}^d\to\mathbb{R}^N$): Reconstructs the input from the latent code.
    \item Corrector ($C: \mathbb{R}^d \to \mathbb{R}^d$): Corrects corrupted inputs in the latent space.
    \item Detector ($M: \mathbb{R}^d\to[0,1]^d$): Measures the level of anomaly in the latent representation and outputs a score.
\end{enumerate}

Pre-training the autoencoder as a denoising one exposes it to perturbations. While this improves reconstruction robustness, it fundamentally contradicts the requirements for anomaly detection derived in \autoref{sec:gradclip}. As shown in \autoref{thm:lipschitz_noise}, denoising forces the encoder to minimize its sensitivity to input perturbations. However, explicitly suppressing perturbations in $E$ would diminish the anomaly signal in the latent representation $z$, rendering the subsequent detector $M$ ineffective. Therefore, we pretrain the multimodal autoencoder exclusively on clean data. This preserves the encoder's sensitivity to out-of-distribution shifts, ensuring that localized perturbations propagate into the latent space, consistent with the structural analysis in \autoref{thm:comparison}. After pre-training, the weights of encoder $E$ and decoder $D$ are frozen, establishing a fixed, deterministic mapping. With $(E, D)$ fixed, the objective shifts to training the corrector $C$ to invert the effect of a perturbation $\delta$. This decouples the conflicting goals of sensitivity assigned to $E$ and robustness assigned to $C$.

\subsection{Self-supervised Compute for Failure Detection and Correction}
\label{ssec:ssl_compute}

At the core of the proposed architecture lies the compute block \( C: \biguplus_{i=1}^M \mathbb{R}^{z_i} \to \mathbb{R}^z \), which aggregates modality-specific latent representations into a unified, fault-corrected embedding. For each modality \( i \), let
\begin{align}
    z_i = E_i(x_i), \quad z_i^{\mathrm{fail}} = E_i(f^{(i)}(x_i)),
\end{align}
denote the clean and corrupted latent codes, respectively. The clean and corrupted aggregated embeddings are thus
\begin{align}
    z_c = C([z_i]_{i=1}^M), \qquad z_f = C([z_i^{\mathrm{fail}}]_{i=1}^M).
\end{align}

The function \(C\) serves a dual objective: (i) before correction, the representations \( \{z_i^{\mathrm{fail}}\} \) should remain separable from their clean counterparts to enable fault detection, and (ii) after correction, the fused representation \( z_f \) should align closely with \( z_c \) to ensure consistency and recovery. These objectives are enforced via two competing regularizers.

To encourage distinguishability of clean and corrupted states, a contrastive margin loss is applied in the pre-compute space:
\begin{align}
    \mathcal{L}_{\mathrm{con}} = \mathbb{E}\big[\max(0,\, m - \|z_c - z_f\|_2)\big],
\end{align}
where \(m>0\) denotes a minimum separation margin. The loss function, $\mathcal{L}_{\mathrm{con}}$, actively maximizes the $\ell_2$-distance between clean ($z_c$) and corrupted ($z_f$) embeddings, driving this separation to be greater than the margin ($m$). Therefore, minimizing $\mathcal{L}_{\mathrm{con}}$ forces the system to satisfy the constraint $\|z_c - z_f\|_2 \geq m$. 

Under a global Lipschitz constraint, the encoder satisfies
\begin{align}
    \|z_c - z_f\|_2 \le L\|x - (x+\mathbf{n})\|_2 = L\|\mathbf{n}\|_2.
\end{align}
This means that when $L$ is small, the embeddings of $x$ and $x+\mathbf{n}$ remain close, indicating robustness to the perturbation $\mathbf{n}$. In contrast, a margin-based contrastive loss encourages distinguishability by enforcing a minimal separation $m$:
\begin{align}
    \|z_c - z_f\|_2 \ge m \quad (\mathcal{L}_{\mathrm{con}} \to 0).
\end{align}

If the encoder is indeed constrained to be $L$-Lipschitz, these two conditions combine to give
\begin{align}
    m \le \|z_c - z_f\|_2 \le L,\|\mathbf{n}\|_2
\quad\Rightarrow\quad L \ge \frac{m}{\|\mathbf{n}\|_2}.
\end{align}
Thus, under a Lipschitz constraint, the contrastive loss effectively forces the model to allocate a sufficiently large local Lipschitz constant for the specific perturbation pair $(x, x+\mathbf{n})$. In other words, even if $\|\mathbf{n}\|_2$ is small, the representation must change by at least $m$, causing the encoder to become locally sensitive along the direction defined by $\mathbf{n}$. Importantly, this effect applies only when a global Lipschitz bound is enforced. Without such a constraint, the network can achieve the separation $m$ without implying any particular Lipschitz behavior.

After aggregation, the compute block should contract corrupted representations toward the clean manifold:
\begin{align}
    \mathcal{L}_{\mathrm{sim}} = \mathbb{E}\big[\|C([z_i^{\mathrm{fail}}]) - C([z_i])\|_2^2\big],
\end{align}
which minimizes the distortion introduced by perturbations. This term enforces stability and correction capability. The overall loss governing the computational dynamics is
\begin{align}
    \mathcal{L}_C = \lambda_{\mathrm{con}} \mathcal{L}_{\mathrm{con}} + \lambda_{\mathrm{sim}} \mathcal{L}_{\mathrm{sim}},
\end{align}
with weights \(\lambda_{\mathrm{con}}, \lambda_{\mathrm{sim}}>0\) balancing detection sensitivity and correction fidelity.  

Let \(L_C\) denote the Lipschitz constant of \(C\). If \( \lambda_{\mathrm{con}} \gg \lambda_{\mathrm{sim}} \), optimization emphasizes representation separation, increasing \(L_C\) and thus sensitivity to perturbations, which is beneficial for anomaly detection but prone to overreacting to minor noise. Conversely, if \( \lambda_{\mathrm{sim}} \gg \lambda_{\mathrm{con}} \), \(C\) is trained as a contraction mapping (\(L_C < 1\)), ensuring robustness and fault correction but reducing detectability of small anomalies. The optimal trade-off occurs when \(C\) maintains a piecewise Lipschitz behavior: expansive in directions aligned with anomaly subspaces, and contractive along nominal data manifolds. This duality formalizes the balance between contrastive and similarity alignment, allowing the compute block to serve simultaneously as a fault detector and corrector within the multimodal latent space.

\subsection{Layer-Specific Lipschitz Regularization Strategy}

From \autoref{thm:lipschitz_noise}, the local Taylor expansion under input perturbations reveals that the deviation  $|f(x+\delta)-f(x)|$  decomposes into two fundamentally different contributions: a first-order term governed by the Jacobian norm $L_f^{(\mathrm{clean})}=\|J_f(x)\|$, and a second-order term controlled by the Jacobian variation $L_J=\|J_f(x+\delta)-J_f(x)\|$. The former determines the layer’s intrinsic sensitivity to small input changes, while the latter quantifies curvature and noise amplification. These effects play opposing functional roles in anomaly detection, where high sensitivity is beneficial, and correction or reconstruction, where strong contraction and low curvature are required. Therefore, a uniform Lipschitz constraint across the entire network is suboptimal. Instead, the \autoref{thm:lipschitz_noise} motivates a layer-specific regularization strategy in which encoder layers are allowed, and even encouraged, to maintain higher Jacobian norms for enhanced perturbation sensitivity, while compute and reconstruction layers are regularized to enforce low $L_f$ and $L_J$, promoting stability and contraction.

Formally, let the network be partitioned into $N_E$ encoder layers $\{E^{(\ell)}\}_{\ell=1}^{N_E}$ and $N_C$ compute block layers $\{C^{(m)}\}_{m=1}^{N_C}$. Each layer $f^{(\ell)}$ is characterized by its local Jacobian $J^{(\ell)} = \nabla_x f^{(\ell)}(x)$ and corresponding local Lipschitz quantities
\begin{align}
    L_f^{(\ell)} &= \|J^{(\ell)}\|_2, \\ L_J^{(\ell)} &= \mathbb{E}_{x,\delta}\big[\|J^{(\ell)}(x+\delta)-J^{(\ell)}(x)\|_F\big].
\end{align}
The first term measures the local sensitivity of clean inputs, while the second quantifies the curvature or Jacobian variation under noise. Controlling these constants enables architectural tuning between stability and sensitivity.

We assign different regularization objectives depending on the layer role:
\begin{align}
    \mathcal{L}_{\mathrm{reg}} & = \sum_{\ell \in \mathcal{E}} \left( \lambda_{\mathrm{con}}^{(\ell)} \|J^{(\ell)}(x+\delta)-J^{(\ell)}(x)\|_F^2 \right)\\ &+ \sum_{m \in \mathcal{C}} \left( \lambda_{\mathrm{corr}}^{(m)} \|J^{(m)}(x)\|_F^2 \right),\nonumber
\end{align}
where $\mathcal{E}$ and $\mathcal{C}$ denote the encoder and compute-layer index sets, respectively. The weighting coefficients $\lambda_{\mathrm{con}}^{(\ell)}$ and $\lambda_{\mathrm{corr}}^{(m)}$ are selected such that $\lambda_{\mathrm{con}}^{(\ell)} \approx \lambda_{\mathrm{corr}}^{(m)}$, encouraging similar curvature in the encoder and the compute block. Therefore, the overall loss for optimizing the compute block results in:
\begin{align}
    \mathcal{L} = \mathcal{L}_C + \mathcal{L}_{\mathrm{reg}}.
\end{align}

Practically, this analysis leads to two complementary but spatially separated regimes. For the correction and denoising regime, the objective is to minimize both $L_f^{(\mathrm{clean})}$ and $L_J$ to enforce contraction mappings. This is achieved through spectral norm regularization, Jacobian penalties, or gradient clipping in layers beyond the anomaly detection. Formally, gradient clipping constrains the local Jacobian norm via
\begin{align}
    g \leftarrow g \cdot \min\!\left(1, \frac{\tau}{\|g\|_2}\right),
\end{align}
where $\tau>0$ is the clipping threshold, effectively bounding $L_J \le \tau$. This stabilizes training and ensures smooth reconstruction in the latent correction phase.

To enhance the sensitivity of specific layers to localized perturbations for the anomaly detection regime without inducing gradient explosion, we adopt a selective gradient amplification strategy combined with global renormalization. Let $\theta$ denote all network parameters and $\theta_a \subset \theta$ the subset associated with the amplification region. During backpropagation, the gradients corresponding to $\theta_a$ are scaled by an amplification factor $\alpha > 1$, while the overall gradient norm is constrained by a clipping threshold $\tau$. Formally, for a given loss function $\mathcal{L}$, we compute
\begin{align}
    \tilde{g} = \text{clip}\left( g[\theta \setminus \theta_a] \cup g_a, \tau \right),\\
    \text{with } g = \nabla_\theta \mathcal{L}, \quad g_a = \alpha g[\theta_a], \nonumber
\end{align}
where $\text{clip}(g, \tau) = g \cdot \min\!\left(1, \frac{\tau}{\|g\|_2}\right)$. 

This operation amplifies directional gradients in targeted layers to increase the local Lipschitz constant, thereby enhancing the model's sensitivity to small perturbations. Meanwhile, renormalization ensures that the overall gradient energy remains bounded, ensuring training stability. The corresponding pseudocode is presented in \autoref{alg:safe_amplify}. Practically, this mechanism allows the model to emphasize anomaly-relevant features without compromising convergence or introducing instability.

\begin{algorithm}[ht]
\caption{Selective Gradient Scaling and Renormalization.}
\label{alg:safe_amplify}
\begin{algorithmic}[1]
\Require amplification factor $\alpha>1$, clipping threshold $\tau>0$, learning rate $\eta$
\For{each minibatch $(x, y)$}
  \State Compute loss $\mathcal{L} = \mathcal{L}_{\mathrm{task}}(x, y)$
  \State Compute full gradient $g = \nabla_\theta \mathcal{L}$
  \State Identify target parameter subset $\theta_a \subset \theta$
  \State Amplify gradients on $\theta_a$: $g[\theta_a] \leftarrow \alpha \cdot g[\theta_a]$
  \State Compute total norm: $n = \|g\|_2$
  \If{$n > \tau$}
      \State Normalize gradients: $g \leftarrow g \cdot \frac{\tau}{n}$
  \EndIf
  \State Update parameters: $\theta \leftarrow \theta - \eta \, g$
\EndFor
\end{algorithmic}
\end{algorithm}

This layer-wise Lipschitz modulation establishes a directional flow of sensitivity across the architecture: early and middle convolutional encoder layers amplify and preserve perturbations for detection, while the subsequent compute block acts as a Lipschitz contraction, globally smoothing and correcting these perturbations. The resulting gradient field $\nabla_x f(x)$ transitions from expansive to contractive along the network depth, realizing an effective detect–correct framework that is mathematically grounded in controlled Lipschitz geometry.

\section{Industrial Deployment and Practical Application}
\label{sec:application}

In this section, we outline the integration of the proposed MMSSL framework within a real-world industrial production environment. The deployment architecture, illustrated in \autoref{fig:application_workflow}, streamlines industrial integration through a three-stage pipeline comprising data ingestion, intelligent routing, and downstream execution. The system continuously ingests $N$ heterogeneous modalities via dedicated encoders ($E_1, \dots, E_n$) that map high-dimensional signals into a unified latent feature space while preserving perturbation sensitivity. At the core of the framework lies a conditional logic gate driven by a lightweight anomaly detector. Unlike traditional autoencoders that indiscriminately reconstruct all inputs, our approach dynamically routes the data based on its health. If the detector classifies the embedding as clean, the representation bypasses the correction block entirely, minimizing inference latency and preserving original signal fidelity. Conversely, upon detection of faults such as sensor corruption or occlusion, the embedding is routed through the Lipschitz-regularized correction module, which projects the distorted features back onto the learned clean manifold. This mechanism ensures that downstream applications receive a guaranteed valid representation, effectively decoupling high-level process control from low-level sensor reliability.

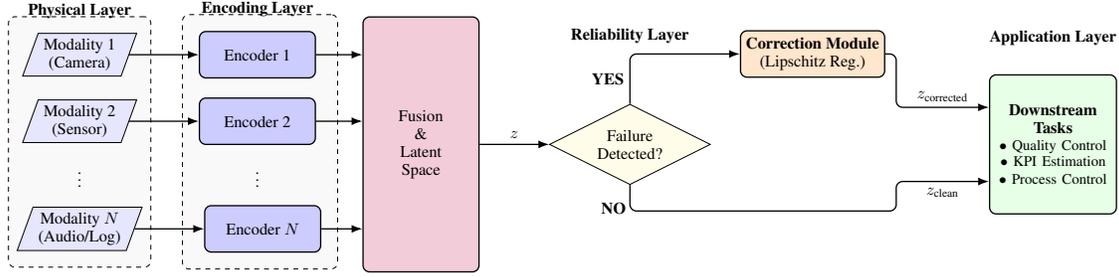
\begin{figure*}[htb]
\centering
\resizebox{0.9\linewidth}{!}{%
\begin{tikzpicture}[
    node distance=1.5cm and 1.5cm,
    box/.style={draw, rounded corners, minimum width=2.5cm, minimum height=1cm, align=center, fill=white, thick},
    sensor/.style={draw, trapezium, trapezium left angle=70, trapezium right angle=110, fill=blue!10, minimum width=2cm, align=center},
    decision/.style={draw, diamond, aspect=2, fill=yellow!10, text width=2cm, align=center, inner sep=1pt},
    arrow/.style={-Latex, thick, rounded corners},
    group/.style={draw, dashed, inner sep=10pt, fill=gray!5, rounded corners}
]

\node[sensor] (mod1) {Modality 1\\(Camera)};
\node[sensor, below=0.5cm of mod1] (mod2) {Modality 2\\(Sensor)};
\node[below=0.3cm of mod2] (dots) {$\vdots$};
\node[sensor, below=0.3cm of dots] (modn) {Modality $N$\\(Audio/Log)};

\node[box, right=1.5cm of mod1, fill=blue!20] (enc1) {Encoder 1};
\node[box, right=1.5cm of mod2, fill=blue!20] (enc2) {Encoder 2};
\node[right= 3.5cm of dots] (dots1) {$\vdots$};
\node[box, right=1.5cm of modn, fill=blue!20] (encn) {Encoder $N$};

\node[box, fill=purple!20, minimum height=5.5cm, right=1cm of enc2, yshift=-0.5cm, text width=1.5cm] (fusion) {Fusion\\\&\\Latent Space};

\node[decision, right=1.5cm of fusion] (dec) {Failure\\Detected?};

\node[box, fill=orange!20, above right=1cm and 1.5cm of dec] (correct) {\textbf{Correction Module}\\(Lipschitz Reg.)};

\coordinate[below right=1cm and 1.5cm of dec] (bypass);

\node[box, fill=green!10, right=6cm of dec, text width=2.5cm, minimum height=3cm] (downstream) {\textbf{Downstream Tasks}\\ \footnotesize $\bullet$ Quality Control\\ $\bullet$ KPI Estimation\\ $\bullet$ Process Control};

\draw[arrow] (mod1) -- (enc1);
\draw[arrow] (mod2) -- (enc2);
\draw[arrow] (modn) -- (encn);

\draw[arrow] (enc1.east) -- (fusion.west |- enc1.east);
\draw[arrow] (enc2.east) -- (fusion.west |- enc2.east);
\draw[arrow] (encn.east) -- (fusion.west |- encn.east);

\draw[arrow] (fusion) -- node[above] {$z$} (dec);

\draw[arrow] (dec.north) |- node[near start, left] {\textbf{YES}} (correct.west);
\coordinate (turn_x) at ($(downstream.west) + (-2, 0)$);
\draw[arrow] (correct.east) -- (correct.east -| turn_x) |- node[near end, above] {$z_{\text{corrected}}$} ($(downstream.west) + (0, 0.8)$);
\draw[arrow] (dec.south) |- (bypass) -- (bypass -| turn_x) |- node[near end, below] {$z_{\text{clean}}$} ($(downstream.west) + (0, -0.8)$);

\node[above=0.2cm of mod1, font=\bfseries] {Physical Layer};
\node[above=0.2cm of enc1, font=\bfseries] {Encoding Layer};
\node[above=1.2cm of dec, font=\bfseries] {Reliability Layer};
\node[below=.25cm of dec, xshift=-0.35cm] {\textbf{NO}};
\node[above=0.5cm of downstream, font=\bfseries] {Application Layer};

\begin{scope}[on background layer]
    \node[fit=(mod1)(modn), group] {};
    \node[fit=(enc1)(encn), group] {};
\end{scope}

\end{tikzpicture}
}
\caption{Operational pipeline for robust industrial deployment. Heterogeneous inputs are encoded into a joint latent space. The representation is analyzed for faults: clean data bypasses the correction block for efficiency, while faulty data is rectified via the Lipschitz-controlled module before executing downstream tasks.}
\label{fig:application_workflow}
\end{figure*}




\section{Evaluation}
\label{sec:eval}

This section details the evaluation of our proposed fault tolerance framework. We begin by introducing the multimodal datasets leveraged for training and testing, along with a description of the experimental setup, including key hyperparameter choices. We then present and discuss the corresponding quantitative results, which validate the efficacy of our approach in achieving superior detection and correction capabilities.

\subsection{Datasets}

Our experiments utilize three distinct multimodal datasets sourced from the industrial robotics domain and are based on the work \cite{altinses2025benchmarking}. These datasets, which are visually introduced in \autoref{fig:dataset}, are crucial because they act as digital twins of real-world industrial machinery. They are built on authentic operating data and precisely model the physics of actual sensors and systems. Crucially, they include varying levels of task difficulty and structural diversity. Formally, each dataset is composed of $N$ total data samples. A single data sample at index $j$ is a collection of $M$ observations, $(\mathbf{x}_j^{(1)}, \ldots, \mathbf{x}_j^{(M)})$, taken at the same time. The data from each of the $M$ different sensor types has its own space, $\mathcal{M}_i \subseteq \mathbb{R}^{d_i}$, with the specific sensor reading denoted as $\mathbf{x}_j^{(i)} \in \mathcal{M}_i$.

\begin{figure}[htb]
     \centering
     \begin{subfigure}[b]{0.25\linewidth}
         \centering
         \includegraphics[width=\linewidth]{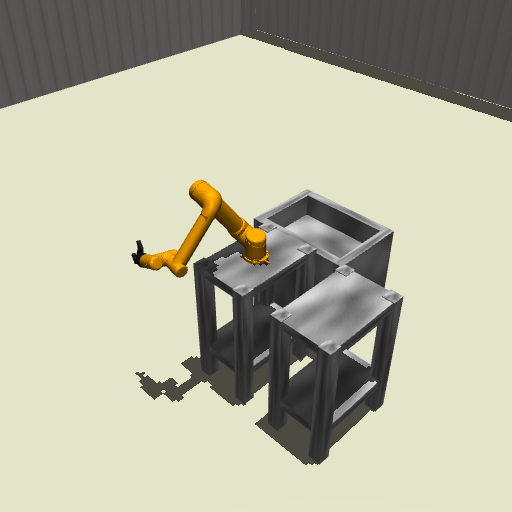}
         \caption{MuJoCo}
         \label{fig:d1}
     \end{subfigure}
     \hfill
     \begin{subfigure}[b]{0.25\linewidth}
         \centering
         \includegraphics[width=\linewidth]{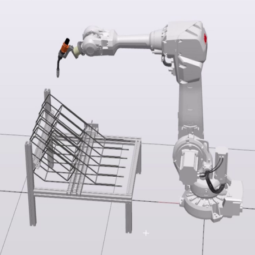}
         \caption{ABB single robot}
         \label{fig:d2}
     \end{subfigure}
     \hfill
     \begin{subfigure}[b]{0.25\linewidth}
         \centering
         \includegraphics[width=\linewidth]{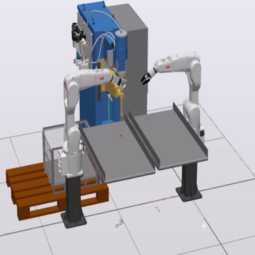}
         \caption{ABB dual robot}
         \label{fig:d3}
     \end{subfigure}
    \caption{Three image modality samples of the three distinct multimodal datasets \cite{altinses2025benchmarking}.}
    \label{fig:dataset}
\end{figure}

In this work, we focus on combining and reconstructing two specific types of paired multimodal data: spatial image data $S$ and kinematic signals $K$. Each of these two data modalities is indexed by $m \in \{s, k\}$. The kinematic sequences $K$, which represent motion or force, are straightforwardly encoded as matrices in $\mathbb{R}^{d_k}$, where $d_k$ is the number of sensor dimensions recorded at each point in time. In contrast, the visual observations $S$ are structured as standard, static RGB images. We use a fixed spatial resolution where the row and column sizes are equal, $s_r = s_c = 256$, meaning the images are represented as tensors in $\mathbb{R}^{3 \times s_r \times s_c}$. 

\subsection{Failure Injection}

The fault injection process is designed based on the methodologies established in the work by \cite{altinses2023multimodal}, which simulate real-world degradation mechanisms. Specifically, the resulting failures for the image modality are visually presented in \autoref{fig:augmented_cams}. In industrial applications, such degradation patterns often correspond to occlusions or mechanical wear, where faults manifest globally and locally with minimum one clean modality. This approach ensures that the network is exposed to representative multimodal failure modes, enabling it to learn fault dependencies during training.

\begin{figure}[htb]
     \centering
     \begin{subfigure}[b]{0.24\linewidth}
         \centering
         \includegraphics[width=\linewidth]{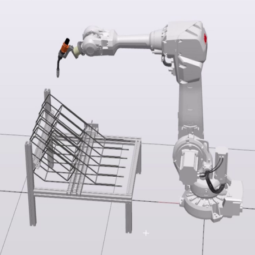}
     \end{subfigure}
     \hfill
     \begin{subfigure}[b]{0.24\linewidth}
         \centering
         \includegraphics[width=\linewidth]{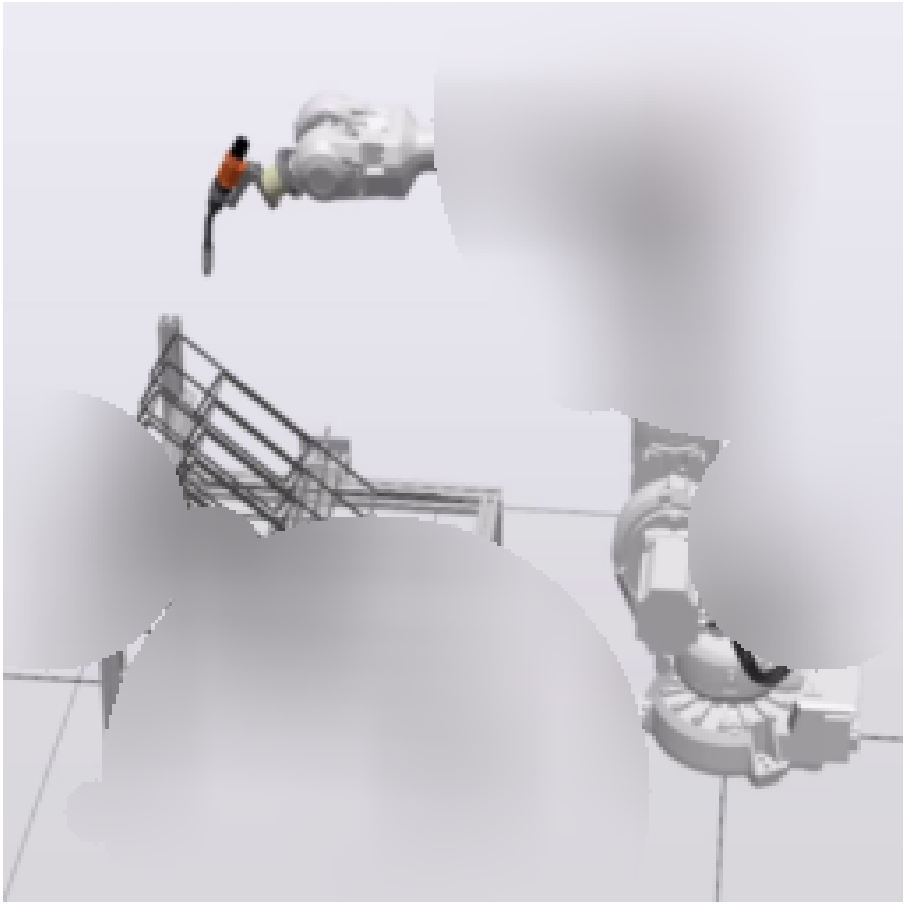}
     \end{subfigure}
     \hfill
     \begin{subfigure}[b]{0.24\linewidth}
         \centering
         \includegraphics[width=\linewidth]{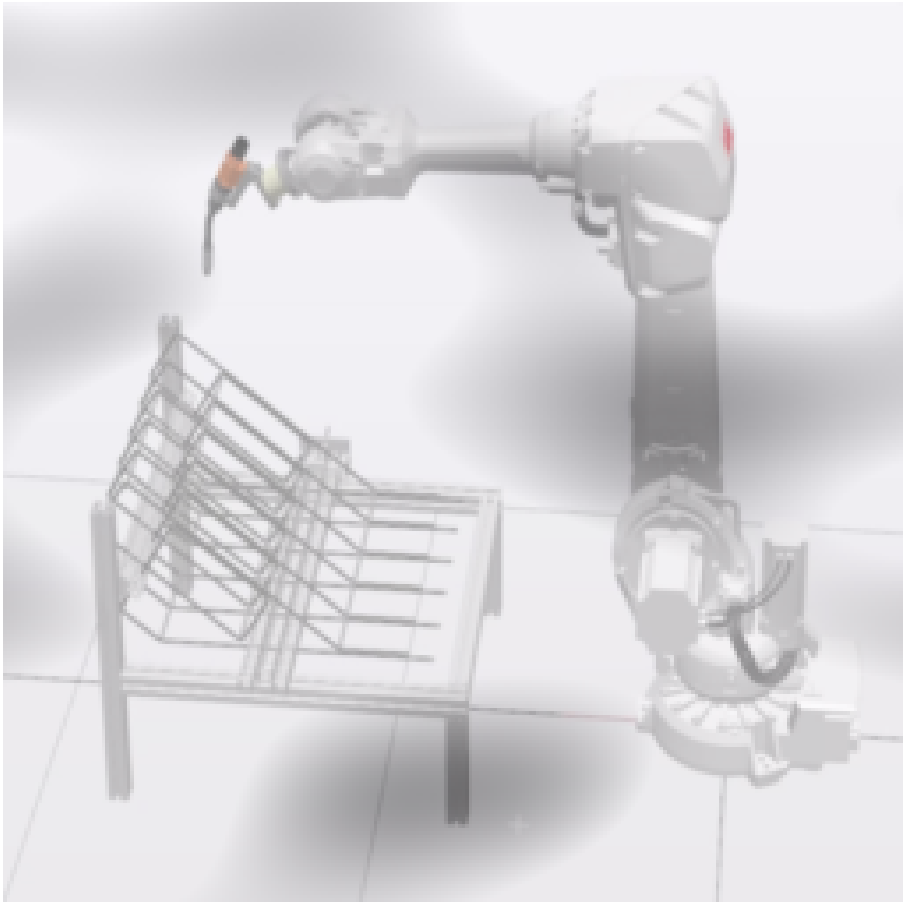}
     \end{subfigure}
     \hfill
     \begin{subfigure}[b]{0.24\linewidth}
         \centering
         \includegraphics[width=\linewidth]{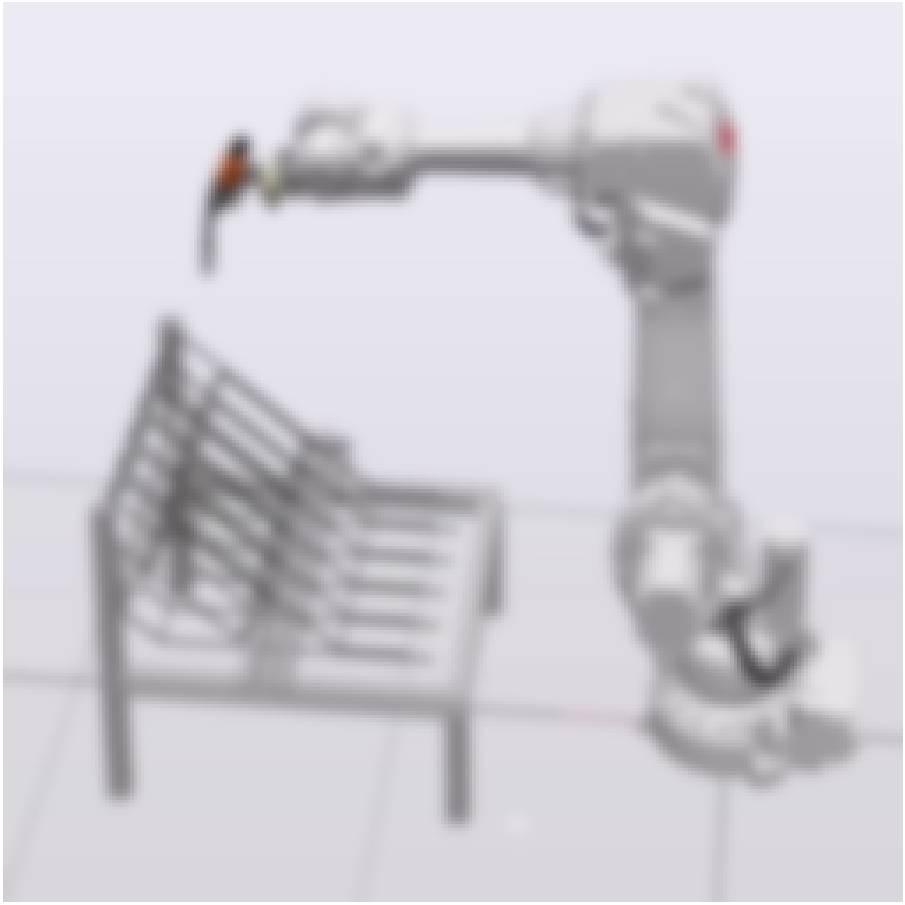}
     \end{subfigure}
     \begin{subfigure}[b]{0.24\linewidth}
         \centering
         \includegraphics[width=\linewidth]{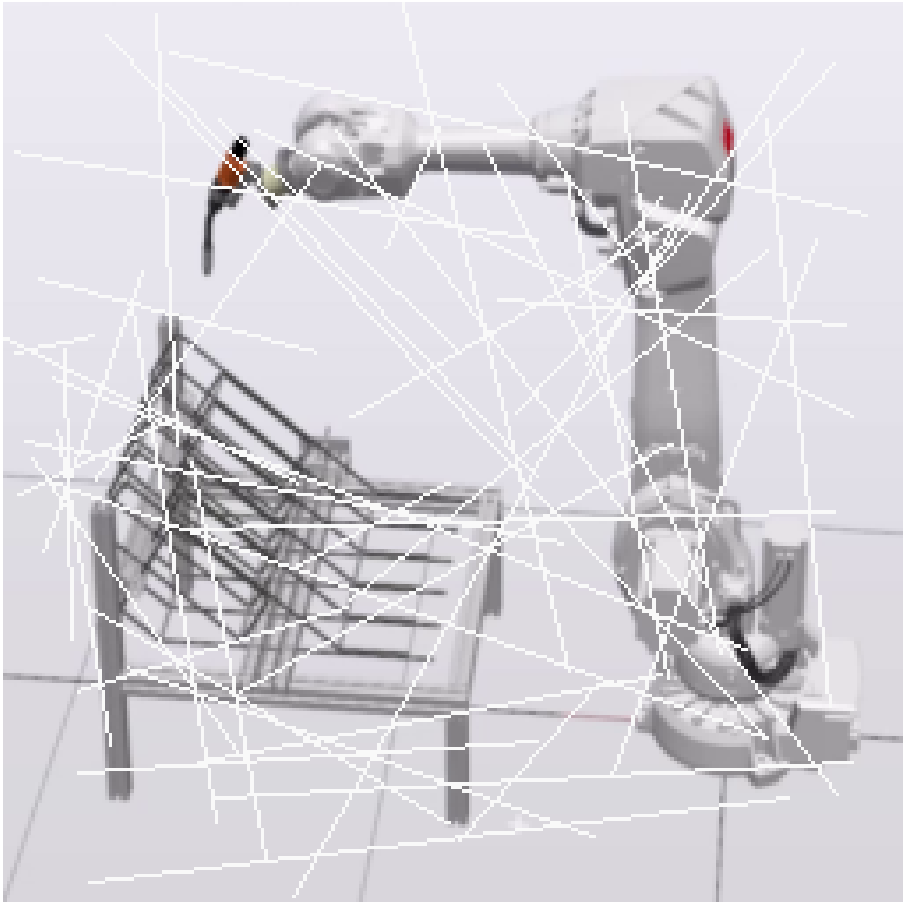}
     \end{subfigure}
     \hfill
     \begin{subfigure}[b]{0.24\linewidth}
         \centering
         \includegraphics[width=\linewidth]{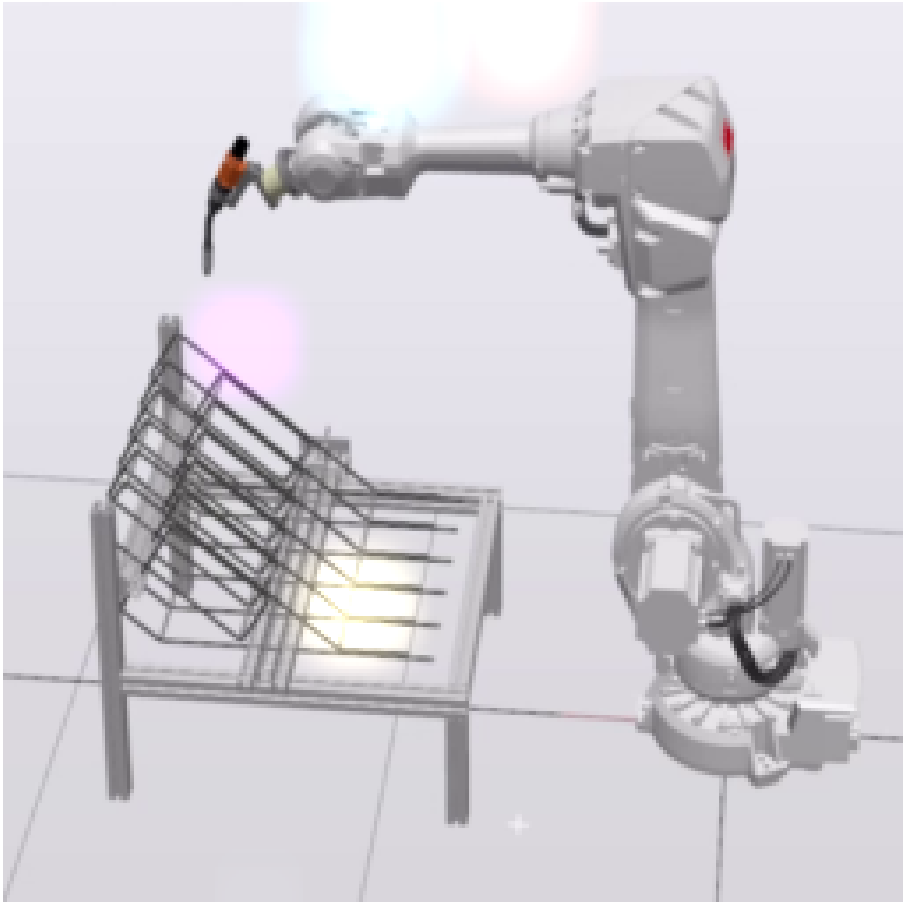}
     \end{subfigure}
     \hfill
     \begin{subfigure}[b]{0.24\linewidth}
         \centering
         \includegraphics[width=\linewidth]{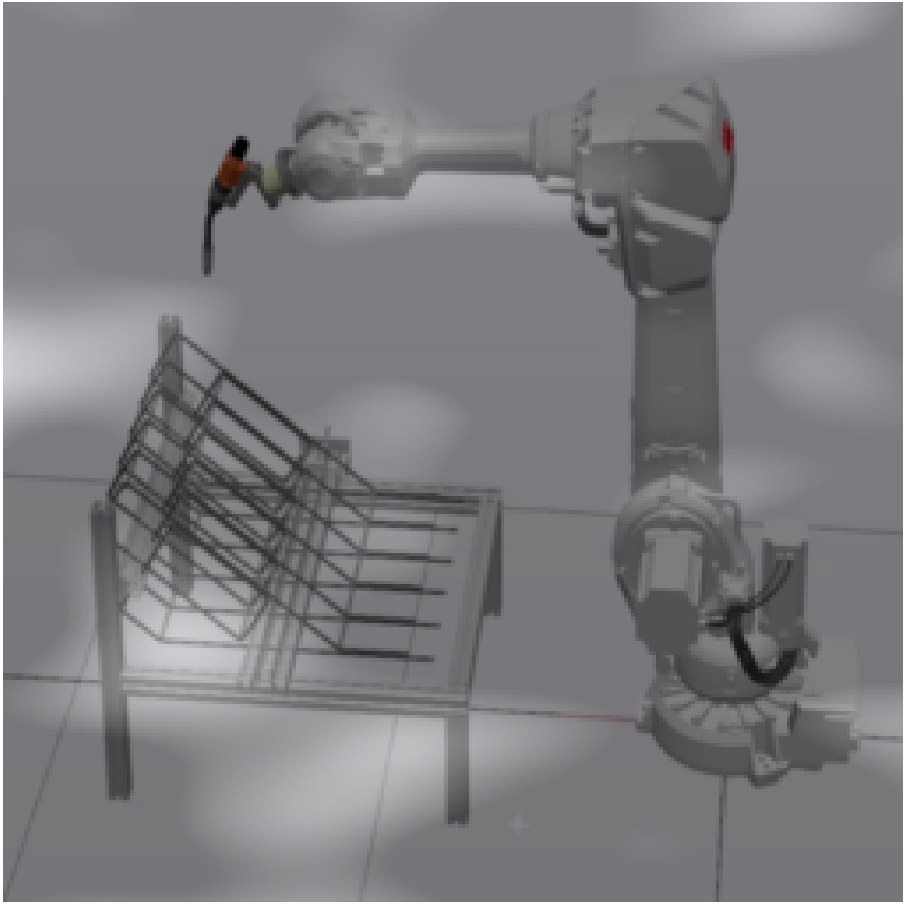}
     \end{subfigure}
     \hfill
     \begin{subfigure}[b]{0.24\linewidth}
         \centering
         \includegraphics[width=\linewidth]{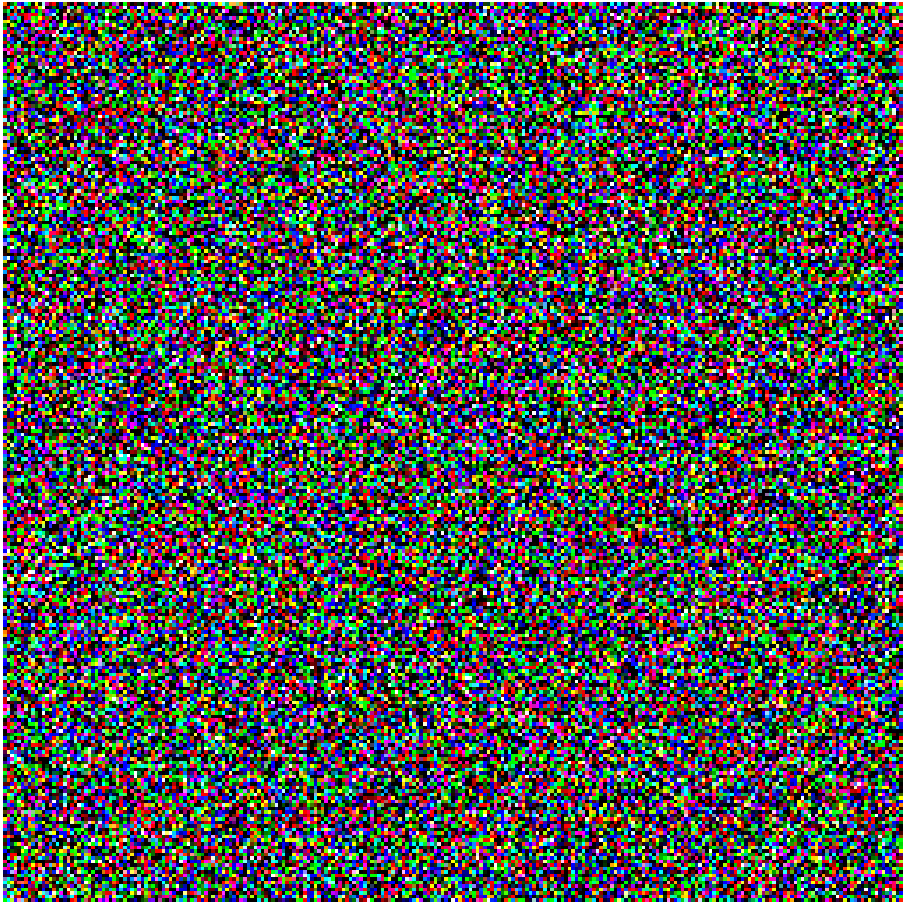}
     \end{subfigure}
     \begin{subfigure}[b]{0.24\linewidth}
         \centering
         \includegraphics[width=\linewidth]{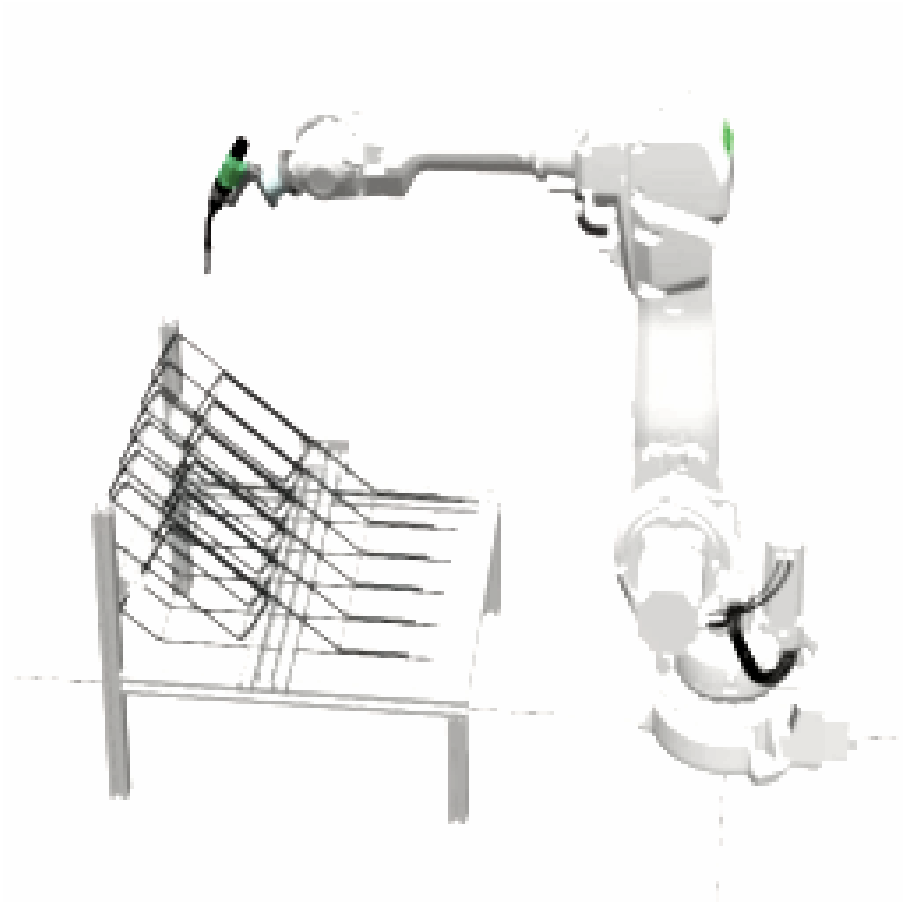}
     \end{subfigure}
     \hfill
     \begin{subfigure}[b]{0.24\linewidth}
         \centering
         \includegraphics[width=\linewidth]{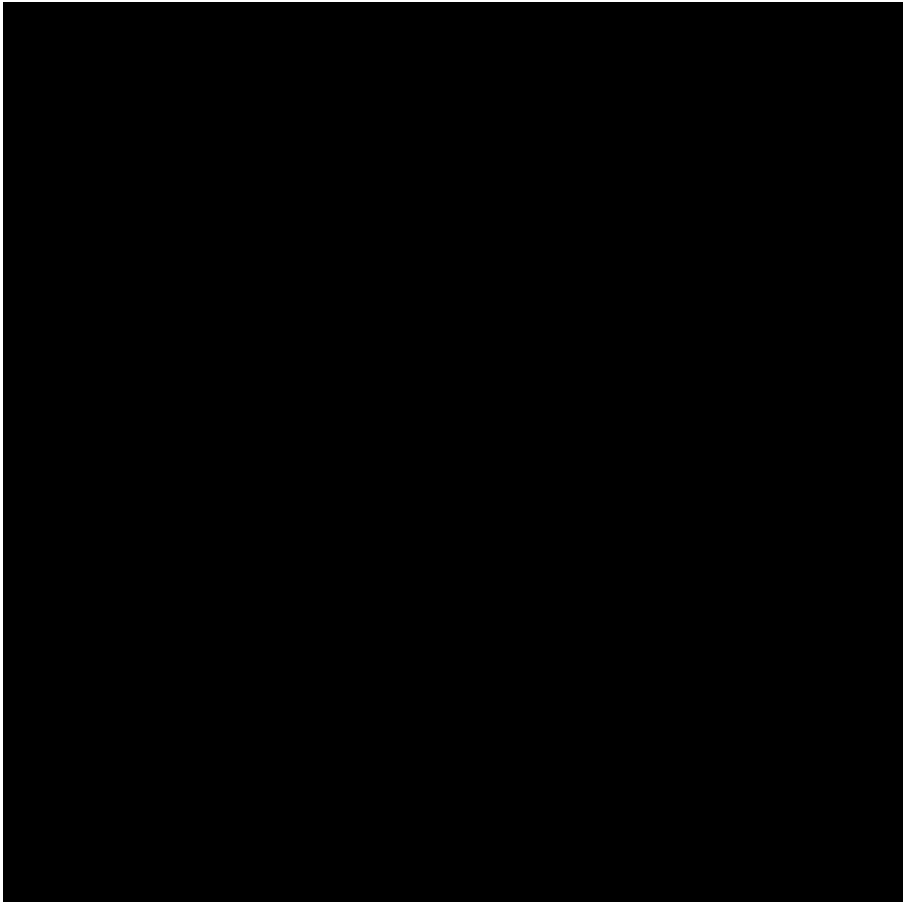}
     \end{subfigure}
     \hfill
     \begin{subfigure}[b]{0.24\linewidth}
         \centering
         \includegraphics[width=\linewidth]{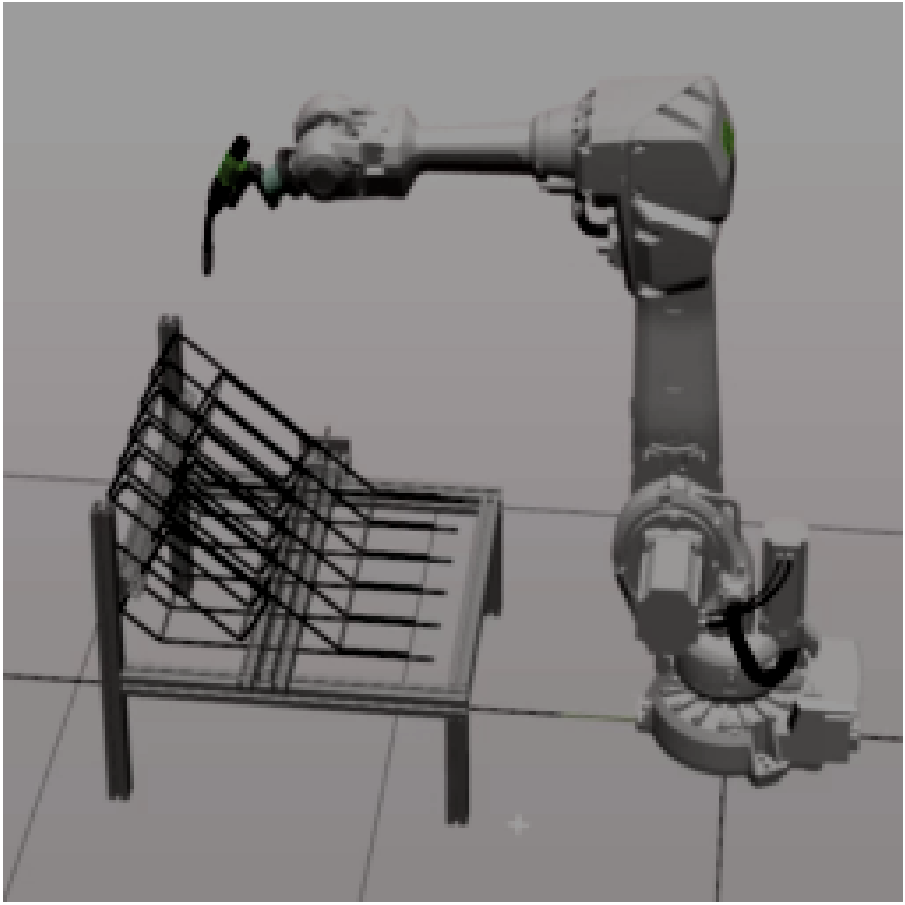}
     \end{subfigure}
     \hfill
     \begin{subfigure}[b]{0.24\linewidth}
         \centering
         \includegraphics[width=\linewidth]{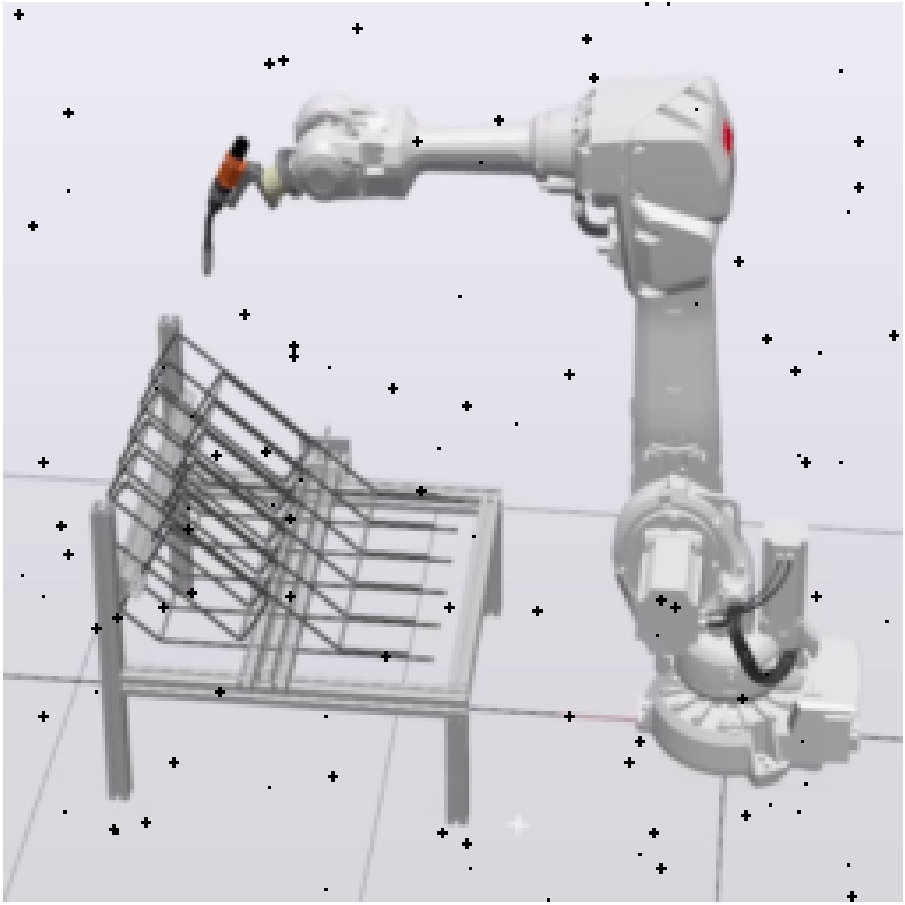}
     \end{subfigure}
    \caption{Several randomly applied augmentation techniques on one random camera sample.}
    \label{fig:augmented_cams}
\end{figure}

\subsection{Experimental Setup}
\label{ssec:exp_setup}

To validate the MMSSL framework for robotic sensory data, we implement the architecture in \autoref{fig:overview} using visual and proprioceptive branches connected via a correction manifold. Preserving anomaly topology as described in \autoref{sec:method}, the spatial encoder $E_1$ employs four 2D convolutional layers (channels 32-256-128-64-32-32-32, kernel $5 \times 5$, stride 2, padding 1). The sensor encoder $E_2$ uses a 1D convolutional network to maintain locality, processing dimension $n_0$ through five layers with kernel size of 3 and stride of 1 to the depth of 288. Latent vectors are concatenated and reconstructed by decoders $D_1, D_2$, mirroring the encoders. The compute block comprises four fully connected layers with size of 576 and ReLU activations, but with the last layer transforming back to the shape of 288. Training proceeds in two isolated stages as described in \autoref{sec:gradclip}. First, the multimodal autoencoder is pre-trained on unperturbed data for 200 epochs using Adam optimizer and the configuration of $\eta_{\mathrm{AE}} = 10^{-4}$, $\beta_1=0.9, \beta_2=0.999$. Second, the compute block $C$ is trained to map corrupted to clean embeddings for 200 epochs with batchsize of 16 to minimize $\mathcal{L}_{\mathrm{total}} = \mathcal{L}_{\mathrm{rec}} + \lambda_1 \mathcal{L}_{\mathrm{con}} + \lambda_2 \mathcal{L}_{\mathrm{sim}}$, with $\lambda_1 = 0.1, \lambda_2 = 1.0$, and margin $m=0.5$. Lipschitz constraints enforce contraction via clipping $\tau = 1.0$ and sensitivity via amplification $\alpha = 3.0$.

\subsection{Results}

This section reports the experimental results of our proposed framework, providing a comparative analysis with state-of-the-art denoising models, including multimodal autoencoder (MMAE) \cite{altinses2025enhancing}, multimodal variational autoencoder (MMVAE) \cite{altinses2025enhancing}, multimodal fuzzy regularization (MMFR) \cite{altinses2023deep}, multimodal concept fusion (MMCF) \cite{altinses2025fault}, multimodal GAN correction (MMGAN) \cite{altinses2026generative}, and multimodal U-GAN correction (MMUGAN) \cite{altinses2026generative}. Our evaluation is structured into four subsections. We begin by assessing the overall performance, followed by an in-depth investigation of the convergence behavior. Then, we discuss the fault detection capabilities, a metric that is intrinsic to our model's architecture, and finally, we prove empirically the theory from \autoref{sec:pertub}.

\subsubsection{Failure Correction Performance}

We begin our analysis by examining the general reconstruction fidelity of the models under failure conditions, with a specific focus on benchmarking our proposed self-supervised learning framework. Unlike generative adversarial approaches that rely on complex discriminator dynamics or variational methods that optimize a probabilistic lower bound, our approach leverages self-supervised geometric constraints to guide the correction process. \autoref{tab:mujoco} details the training and testing performance on the MuJoCo dataset, comparing these distinct paradigms against other state-of-the-art approaches. The table explicitly decomposes the error contributions into high-dimensional camera data and low-dimensional sensor data, providing a detailed view of how effectively the self-supervised signal competes with the state-of-the-art in multimodal failure correction.

\begin{table}[htb]
\centering
\caption{Train and test performances (in $10^{-2}$) of the failure correction using the mean over the final 250 batches of 10 trials. The reconstruction performance of the camera, the sensor, and both combined within the MuJoCo dataset.}
\begin{tabular}{l |c c c }
\toprule
\textbf{Model} & \textbf{Camera} & \textbf{Sensor} & \textbf{Combined} \\
\midrule
\multicolumn{4}{c}{\textbf{Training}}\\
\midrule
MMAE  &  0.114 $\pm$  0.063  &  0.815 $\pm$  0.562  &  0.929 $\pm$  0.584 \\
MMVAE &  0.137 $\pm$  0.026  &  0.694 $\pm$  0.435  &  0.831 $\pm$  0.442 \\
MMFR  &  0.129 $\pm$  0.033  &  0.973 $\pm$  0.500  &  1.102 $\pm$  0.504 \\
MMCF  &  0.113 $\pm$  0.017  &  0.165 $\pm$  0.077  &  0.279 $\pm$  0.079 \\
MMGAN &  0.113 $\pm$  0.025  &  0.158 $\pm$  0.097  &  0.209 $\pm$  0.101 \\
MMUGAN&  0.123 $\pm$  0.027  &  0.257 $\pm$  0.182  &  0.313 $\pm$  0.186 \\
MMSSL &  0.098 $\pm$  0.023  &  0.307 $\pm$  0.302  &  0.405 $\pm$  0.310 \\
\midrule
\multicolumn{4}{c}{\textbf{Testing}}\\
\midrule
MMAE  &  0.145 $\pm$  0.060  &  1.453 $\pm$  0.214  &  1.599 $\pm$  0.183 \\
MMVAE &  0.164 $\pm$  0.011  &  1.395 $\pm$  0.136  &  1.560 $\pm$  0.131 \\
MMFR  &  0.142 $\pm$  0.023  &  1.318 $\pm$  0.121  &  1.461 $\pm$  0.123 \\
MMCF  &  0.154 $\pm$  0.006  &  0.894 $\pm$  0.122  &  1.048 $\pm$  0.121 \\
MMGAN &  0.134 $\pm$  0.005  &  0.894 $\pm$  0.055  &  0.950 $\pm$  0.056 \\
MMUGAN&  0.125 $\pm$  0.002  &  0.779 $\pm$  0.018  &  0.835 $\pm$  0.019 \\
MMSSL &  \bestvalue{0.118 $\pm$  0.005}  &  \bestvalue{0.405 $\pm$  0.040}  &  \bestvalue{0.523 $\pm$  0.042} \\
\bottomrule
\end{tabular}
\label{tab:mujoco}
\end{table}

The quantitative evaluation on the MuJoCo dataset in \autoref{tab:mujoco} demonstrates the superior generalization capability of the proposed MMSSL framework. While generative baselines like MMGAN achieve competitive training fits, they struggle to sustain this performance during testing, likely due to overfitting on the noise distribution. In contrast, MMSSL achieves a state-of-the-art combined test error of 0.523, reducing the error of the closest competitor (MMUGAN, $0.835$) by approximately 37\%. Our method dominates in the sensor modality ($0.405$ compared to $0.779$ for MMUGAN), indicating that the model capture underlying kinematic constraints more effectively. Furthermore, the minimal standard deviation ($\pm$ 0.042) confirms that the Lipschitz-constrained correction acts as a stable contraction mapping, ensuring deterministic recovery and avoiding the training volatility and mode collapse often observed in the adversarial baselines.

To validate this improvement and verify the geometric alignment of the corrected states, we visualize the topology of the learned latent manifold. The following t-SNE embedding in \autoref{fig:tsne} illustrates the distribution of clean, faulty, and corrected representations, demonstrating the model's ability to map perturbed samples back onto the nominal data manifold.

\begin{figure}[htb]
     \centering
     \begin{subfigure}[t]{0.24\linewidth}
         \centering
         \includegraphics[width=\linewidth]{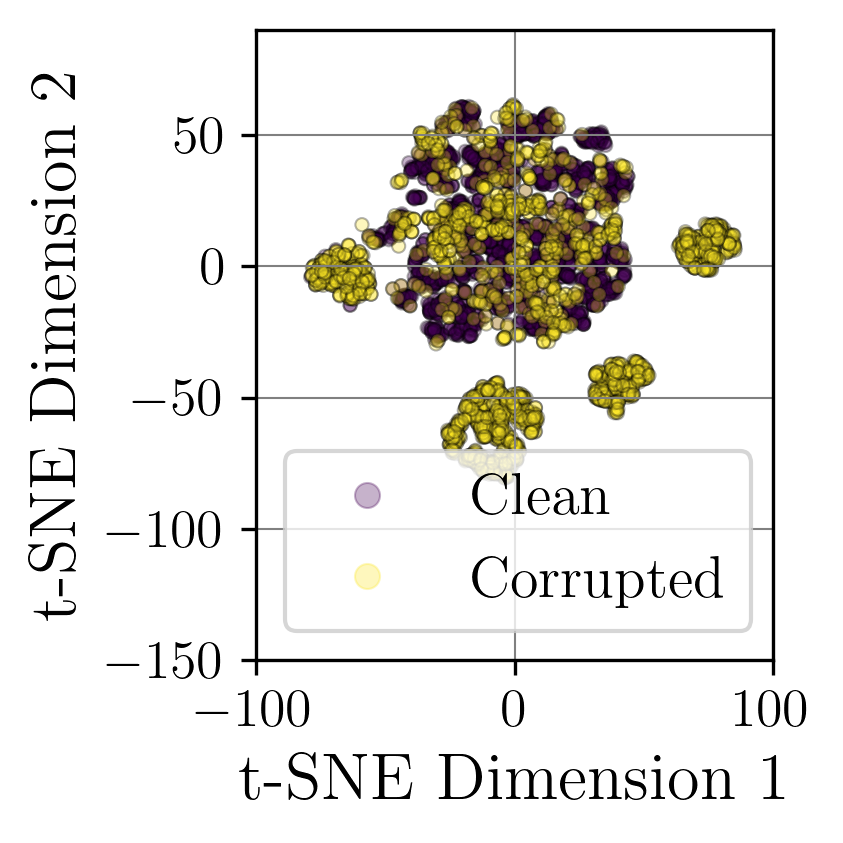}
         \caption{Clean - Corrupted}
         \label{sfig:tsne1}
     \end{subfigure}
     \begin{subfigure}[t]{0.24\linewidth}
         \centering
         \includegraphics[width=\linewidth]{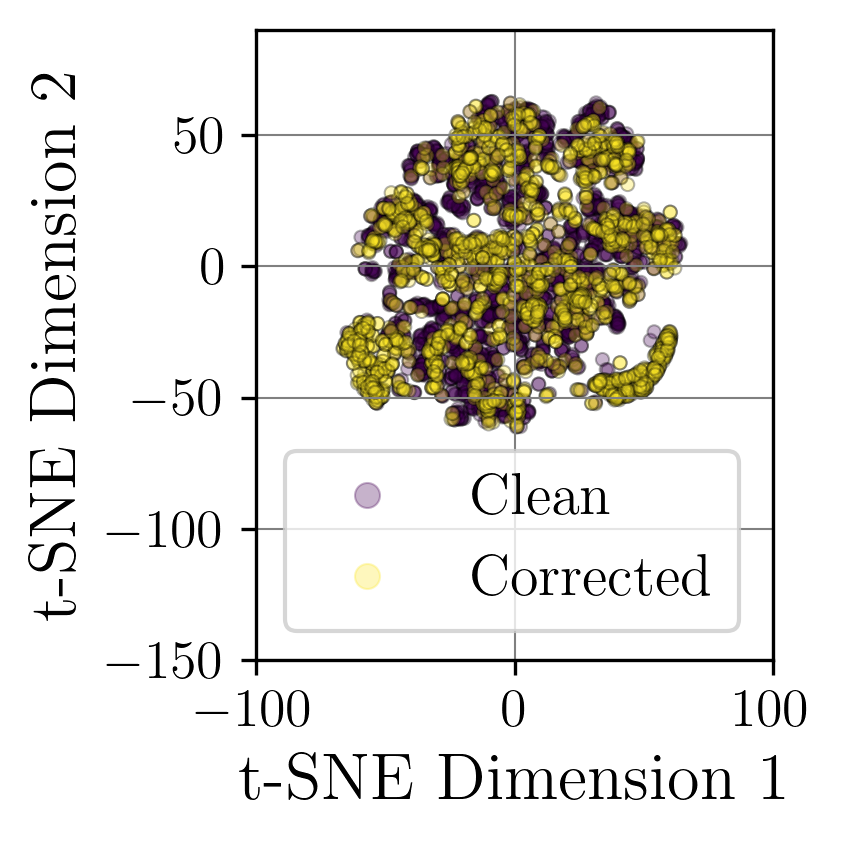}
         \caption{Clean - Corrected}
         \label{sfig:tsne2}
     \end{subfigure}
    \caption{Visualization of the learned latent feature space using t-SNE. (a) Distribution of clean (purple) versus corrupted (yellow) samples before correction. (b) Distribution of clean (purple) versus corrected (yellow) samples.}
    \label{fig:tsne}
\end{figure}

As illustrated, the corrected representations in \autoref{sfig:tsne2} exhibit a near-perfect distributional overlap with the clean baseline, effectively covering the nominal manifold. This confirms that the MMSSL framework successfully projects faulty inputs back onto the valid data distribution. Crucially, the distinct cluster structure of the clean data is fully preserved during this projection. This indicates that the correction process restores physical consistency without collapsing the latent space or discarding semantic information, effectively acting as a precise geometric projection operator.

To validate the scalability and robustness of the proposed framework, we extend our evaluation to a more complex and realistic scenario: the single-robot welding station dataset. This dataset introduces higher variability in sensor noise and more intricate kinematic constraints compared to MuJoCo. \autoref{tab:single} summarizes the reconstruction performance across all baseline models for this setup. 

\begin{table}[htb]
\centering
\caption{Train and test performances (in $10^{-2}$) of the failure correction using the mean over the final 250 batches of 10 trials. The reconstruction performance of the camera, the sensor, and both combined within the single-robot welding station dataset.}
\begin{tabular}{l |c c c }
\toprule
\textbf{Model} & \textbf{Camera} & \textbf{Sensor} & \textbf{Combined} \\
\midrule
\multicolumn{4}{c}{\textbf{Training}}\\
\midrule
MMAE   & 0.131 $\pm$ 0.121 & 1.022 $\pm$ 1.203 & 1.153 $\pm$ 1.282 \\
MMVAE  & 0.158 $\pm$ 0.093 & 0.961 $\pm$ 1.131 & 1.119 $\pm$ 1.180 \\
MMFR   & 0.142 $\pm$ 0.081 & 0.927 $\pm$ 1.092 & 1.069 $\pm$ 1.133 \\
MMCF   & 0.081 $\pm$ 0.022 & 0.602 $\pm$ 0.651 & 0.683 $\pm$ 0.643 \\
MMGAN  & 0.074 $\pm$ 0.035 & 0.563 $\pm$ 0.997 & 0.637 $\pm$ 1.004 \\
MMUGAN & 0.073 $\pm$ 0.016 & 0.540 $\pm$ 0.531 & 0.613 $\pm$ 0.538 \\
MMSSL  & 0.079 $\pm$ 0.032 & 0.478 $\pm$ 1.044 & 0.557 $\pm$ 1.048 \\
\midrule
\multicolumn{4}{c}{\textbf{Testing}}\\
\midrule
MMAE  & 0.141 $\pm$ 0.120 & 1.223 $\pm$ 0.681 & 1.364 $\pm$ 0.802 \\
MMVAE & 0.159 $\pm$ 0.092 & 1.050 $\pm$ 0.561 & 1.209 $\pm$ 0.653 \\
MMFR  & 0.138 $\pm$ 0.083 & 1.071 $\pm$ 0.582 & 1.209 $\pm$ 0.661 \\
MMCF  & 0.091 $\pm$ 0.011 & 0.702 $\pm$ 0.120 & 0.793 $\pm$ 0.122 \\
MMGAN & 0.084 $\pm$ 0.031 & 0.589 $\pm$ 0.476 & 0.673 $\pm$ 0.478 \\
MMUGAN& 0.081 $\pm$ 0.007 & 0.598 $\pm$ 0.072 & 0.679 $\pm$ 0.074 \\
MMSSL & \bestvalue{0.079 $\pm$ 0.003} & \bestvalue{0.496 $\pm$ 0.085} & \bestvalue{0.575 $\pm$ 0.088} \\
\bottomrule
\end{tabular}
\label{tab:single}
\end{table}

The results on the single-robot welding dataset confirm the robustness of the MMSSL framework in complex industrial environments. Our self-supervised approach achieves a state-of-the-art combined reconstruction error of 0.575, significantly outperforming the best generative baseline (MMGAN: 0.673) by a margin of approximately 15\%. MMSSL demonstrates superior reliability in the sensor modality again, reducing the error to 0.496 compared to 0.589 for MMGAN. While adversarial methods achieve competitive pixel-level camera errors, they exhibit significant volatility in the combined metrics (e.g., MMGAN $\pm$ 0.478), indicating a failure to consistently capture multimodal correlations. In contrast, MMSSL enforces semantic consistency through its geometric objective, ensuring that the model prioritizes physical reliability. This results in a stable contraction mapping ($\pm$ 0.088) that effectively filters heavy industrial noise without the stochastic hallucinations typical of generative approaches.

Finally, we scale the evaluation to the most complex scenario: the dual-robot welding station. This dataset introduces intricate inter-agent dependencies and doubles the sensor dimensionality, significantly increasing the difficulty of modeling kinematic correlations. \autoref{tab:dual} summarizes the results, revealing a critical failure mode in baseline architectures. 

\begin{table}[htb]
\centering
\caption{Train and test performances (in $10^{-2}$) of the failure correction using the mean over the final 250 batches of 10 trials. The reconstruction performance of the camera (C), the sensor (S), and both combined (B) within the dual-robot welding station dataset.}
\begin{tabular}{l |c c c }
\toprule
\textbf{Model} & \textbf{Camera} & \textbf{Sensor} & \textbf{Combined} \\
\midrule
\multicolumn{4}{c}{\textbf{Training}}\\
\midrule
MMAE  & 0.031 $\pm$ 0.032 & 1.122 $\pm$ 1.241 & 1.153 $\pm$ 1.273 \\
MMVAE & 0.030 $\pm$ 0.011 & 0.962 $\pm$ 1.092 & 0.992 $\pm$ 1.103 \\
MMFR  & 0.051 $\pm$ 0.041 & 1.121 $\pm$ 1.240 & 1.172 $\pm$ 1.281 \\
MMCF  & 0.021 $\pm$ 0.010 & 0.872 $\pm$ 1.011 & 0.893 $\pm$ 1.021 \\
MMGAN & 0.078 $\pm$ 0.024 & 0.767 $\pm$ 0.562 & 0.844 $\pm$ 0.564 \\
MMUGAN& 0.077 $\pm$ 0.023 & 0.779 $\pm$ 0.729 & 0.856 $\pm$ 0.731 \\
MMSSL & 0.076 $\pm$ 0.024 & 0.683 $\pm$ 2.063 & 0.759 $\pm$ 2.069 \\
\midrule
\multicolumn{4}{c}{\textbf{Testing}}\\
\midrule
MMAE  & 0.032 $\pm$ 0.031 & 1.251 $\pm$ 0.322 & 1.283 $\pm$ 0.353 \\
MMVAE & 0.030 $\pm$ 0.001 & 1.082 $\pm$ 0.162 & 1.112 $\pm$ 0.163 \\
MMFR  & 0.054 $\pm$ 0.031 & 1.274 $\pm$ 0.341 & 1.328 $\pm$ 0.372 \\
MMCF  & \bestvalue{0.020 $\pm$ 0.001} & 0.981 $\pm$ 0.211 & 1.001 $\pm$ 0.212 \\
MMGAN & 0.080 $\pm$ 0.006 & 0.866 $\pm$ 0.142 & 0.945 $\pm$ 0.143 \\
MMUGAN& 0.077 $\pm$ 0.003 & 0.907 $\pm$ 0.167 & 0.984 $\pm$ 0.165 \\
MMSSL & 0.078 $\pm$ 0.002 & \bestvalue{0.777 $\pm$ 0.154} & \bestvalue{0.855 $\pm$ 0.152} \\
\bottomrule
\end{tabular}
\label{tab:dual}
\end{table}

The evaluation on the dual-robot dataset reveals a critical insight regarding multimodal learning dynamics. Baselines such as MMCF and MMVAE achieve superior pixel-level reconstruction (e.g., MMCF: $0.020$), yet this comes at the cost of poor sensor fidelity ($0.981$). This disparity suggests a modality collapse, where the model minimizes the global loss by overfitting to the high-dimensional visual stream while neglecting the lower-dimensional, yet physically critical, sensor data. In contrast, MMSSL prioritizes consistency over superficial visual sharpness. While its camera reconstruction error is higher, it achieves the lowest sensor error ($0.777$), resulting in the best overall combined performance ($0.855$). By outperforming the closest generative baseline (MMGAN) by approximately 10\% and the best autoencoder baseline (MMCF) by 15\%, MMSSL demonstrates that its self-supervised objective successfully fuses the complex dual-robot kinematic streams, preventing visual dominance and ensuring physically grounded fault correction.

\subsubsection{Convergence and Lipschitz behaviour}

Having established the superior failure correction performance of MMSSL across multiple datasets, we now turn our attention to the underlying optimization dynamics that enable this performance. While the previous section quantified the final model performance, this section analyzes the convergence stability and the spectral properties of the learned networks.

\begin{figure}[htb]
    \centering
    \begin{subfigure}[b]{0.24\linewidth}
        \centering
        \includegraphics[width=\linewidth]{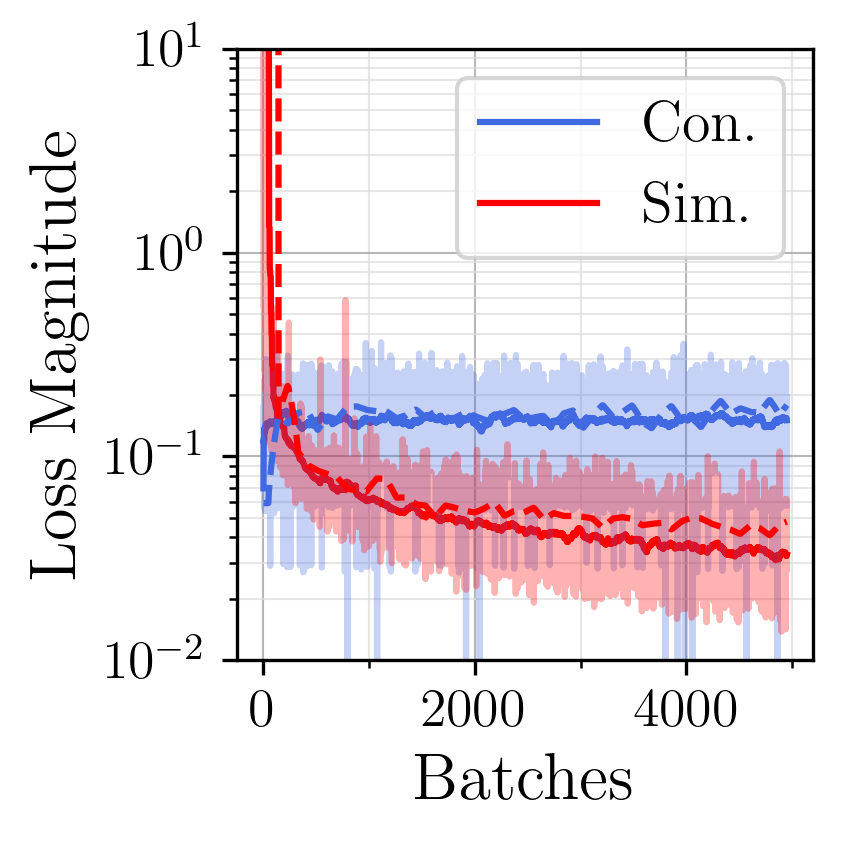}
        \caption{Contrastive vs. Similarity}
        \label{fig:conv_a}
    \end{subfigure}
    \hfill
    \begin{subfigure}[b]{0.24\linewidth}
        \centering
        \includegraphics[width=\linewidth]{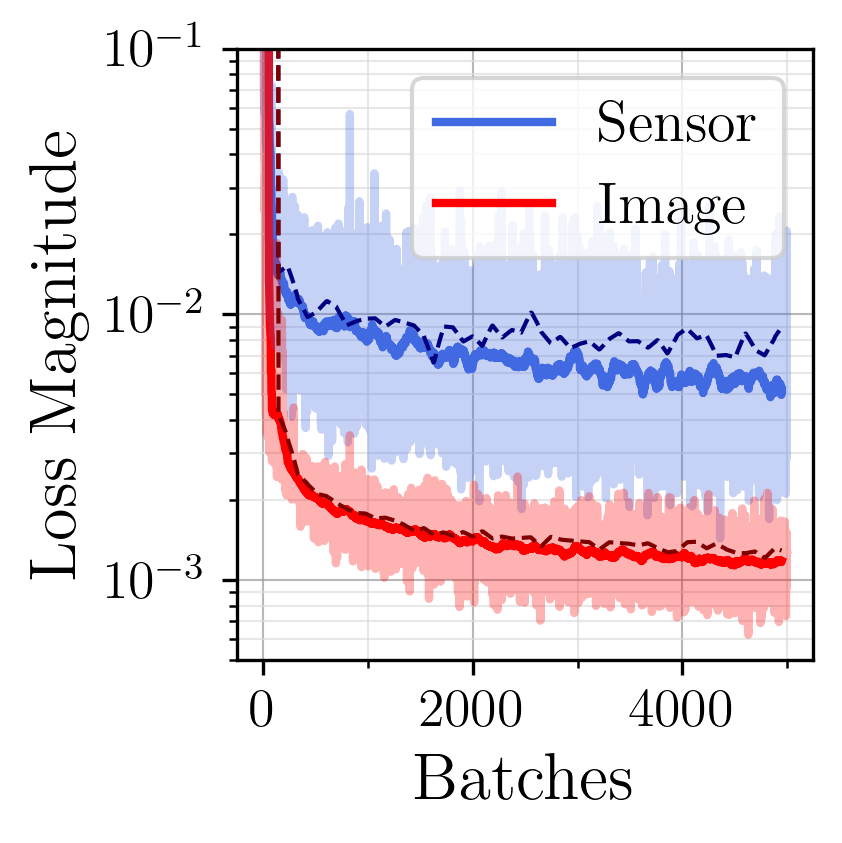}
        \caption{Sensor vs. Image}
        \label{fig:conv_b}
    \end{subfigure}
    \begin{subfigure}[b]{0.24\linewidth}
        \centering
        \includegraphics[width=\linewidth]{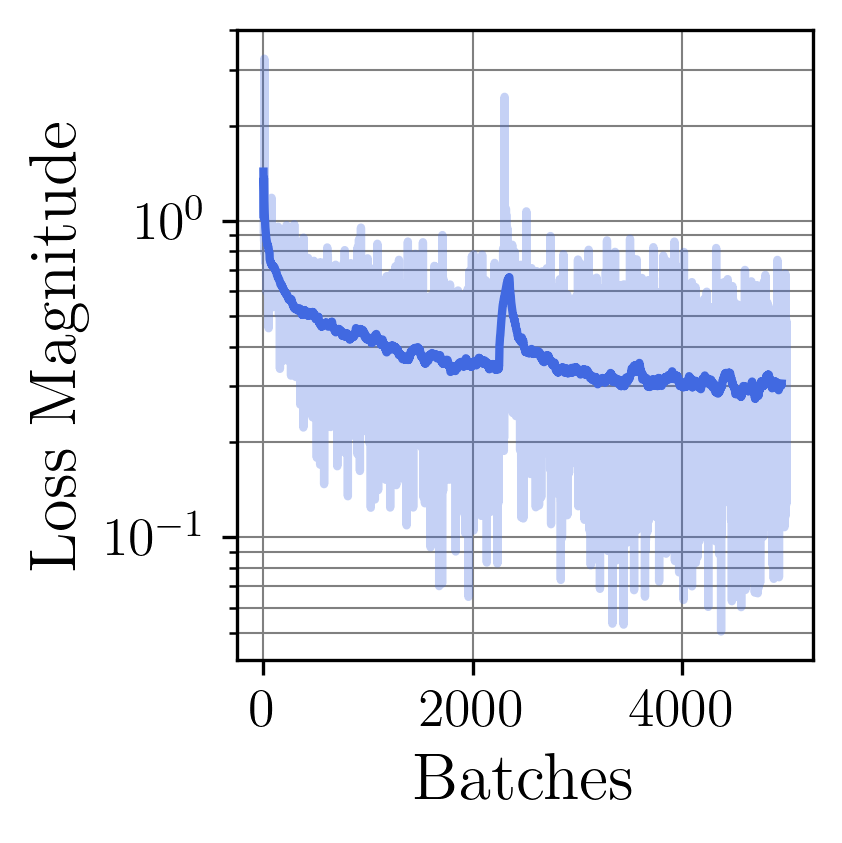}
        \caption{Anomaly Loss}
        \label{fig:conv_c}
    \end{subfigure}
    \hfill
    \begin{subfigure}[b]{0.24\linewidth}
        \centering
        \includegraphics[width=\linewidth]{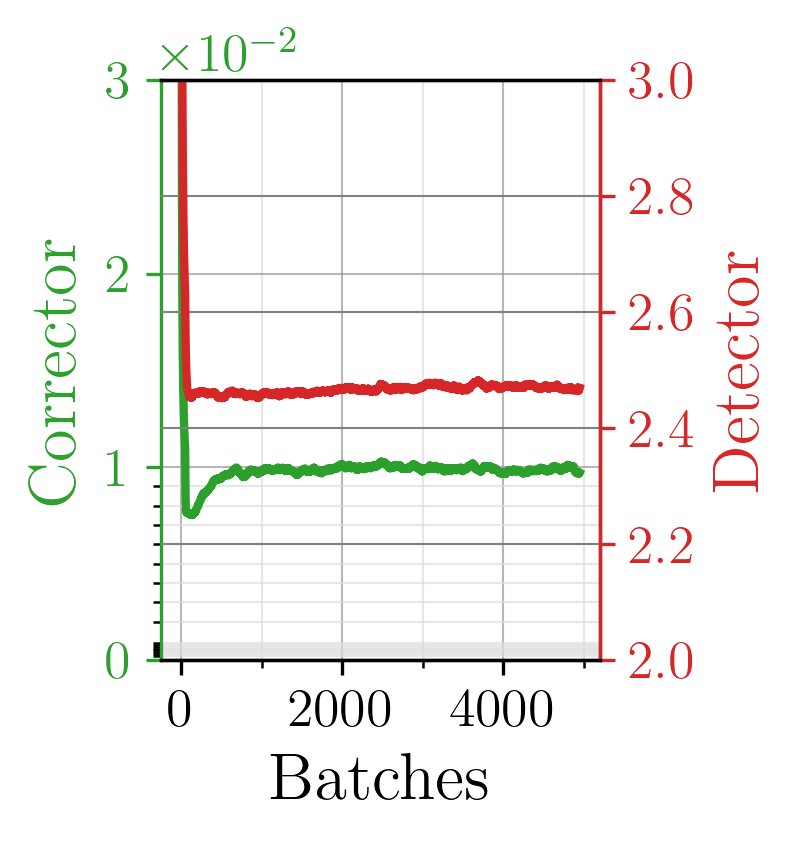}
        \caption{Lipschitz Constraints}
        \label{fig:conv_d}
    \end{subfigure}
    \caption{Training dynamics and spectral properties of MMSSL. (a) Both contrastive and similarity losses converge rapidly, with test losses (dashed) closely tracking training curves. (b) Reconstruction errors for both modalities decrease monotonically, confirming stable learning. (c) The anomaly detection loss stabilizes after an initial adaptation phase. (d) The spectral norm of the Corrector (green) is strictly bounded ($< 0.03$), ensuring a contractive mapping, while the Detector (red) maintains high sensitivity ($> 2.0$).}
    \label{fig:convergence_analysis}
\end{figure}

To verify the optimization stability, we compare the training loss trajectories in \autoref{fig:convergence_analysis}. As illustrated in \autoref{fig:conv_a} and \autoref{fig:conv_b}, our self-supervised objective yields a smooth and fast convergence profile. The contrastive and similarity losses in \autoref{fig:conv_a} decrease monotonically, indicating that the encoder successfully learns a coherent multimodal latent space. Crucially, the test losses (dashed lines) closely track the training curves without divergence, demonstrating robust generalization and the absence of overfitting. Similarly, the reconstruction losses for both sensor and image modalities in \autoref{fig:conv_b} exhibit stable decay, reaching magnitudes of $10^{-3}$ and $10^{-2}$ respectively. Furthermore, we analyze the spectral properties of the network to validate our theoretical stability guarantees. \autoref{fig:conv_d} plots the evolution of the estimated Lipschitz constants for the detection and correction modules. The results confirm our layer-specific regularization strategy. The detector maintains a high Lipschitz value ($L > 2.0$), preserving the high-frequency sensitivity required to identify subtle anomalies. In stark contrast, the correction is effectively constrained to a low-Lipschitz regime ($L \approx 0.01$), empirically satisfying the contraction mapping condition $L < 1$. This spectral gap ensures that while the system remains highly sensitive to faults, the correction process dampens perturbations, preventing error amplification during the recovery phase.

\subsubsection{Failure detection}
\label{ssec:failure_detection}

While robust correction is the primary objective, the reliability of the system hinges on its ability to accurately flag anomalous states. \autoref{tab:fault_detection} summarizes the fault detection performance aggregated over 250 batches.

\begin{table}[htb]
\centering
\caption{Fault detection performance aggregated over 250 batches. Metrics are macro-averaged by class.}
\begin{tabular}{lccc}
\toprule
\textbf{Class} & \textbf{Precision} & \textbf{Recall} & \textbf{F1-Score} \\
\midrule
Clean & 0.929 & 0.999 & 0.963 \\
Sensor Failure & 0.997 & 0.839 & 0.911 \\
Camera Failure & 0.996 & 0.860 & 0.923\\
\midrule
\textbf{Macro Avg} & \textbf{0.974} & \textbf{0.900} & \textbf{0.933} \\
\bottomrule
\end{tabular}
\label{tab:fault_detection}
\end{table}

The proposed framework achieves a high macro F1-Score of 0.933, driven by near-perfect precision across all classes ($>99\%$ for faults). This indicates that the system is highly conservative. It generates virtually no false alarms, a critical property for autonomous industrial monitors where false positives can lead to costly downtime. To understand the optimization stability, we analyze the metric evolution in \autoref{fig:training_dynamics}. 

\begin{figure}[htb]
     \centering
     \begin{subfigure}[t]{0.24\linewidth}
         \centering
         \includegraphics[width=\linewidth]{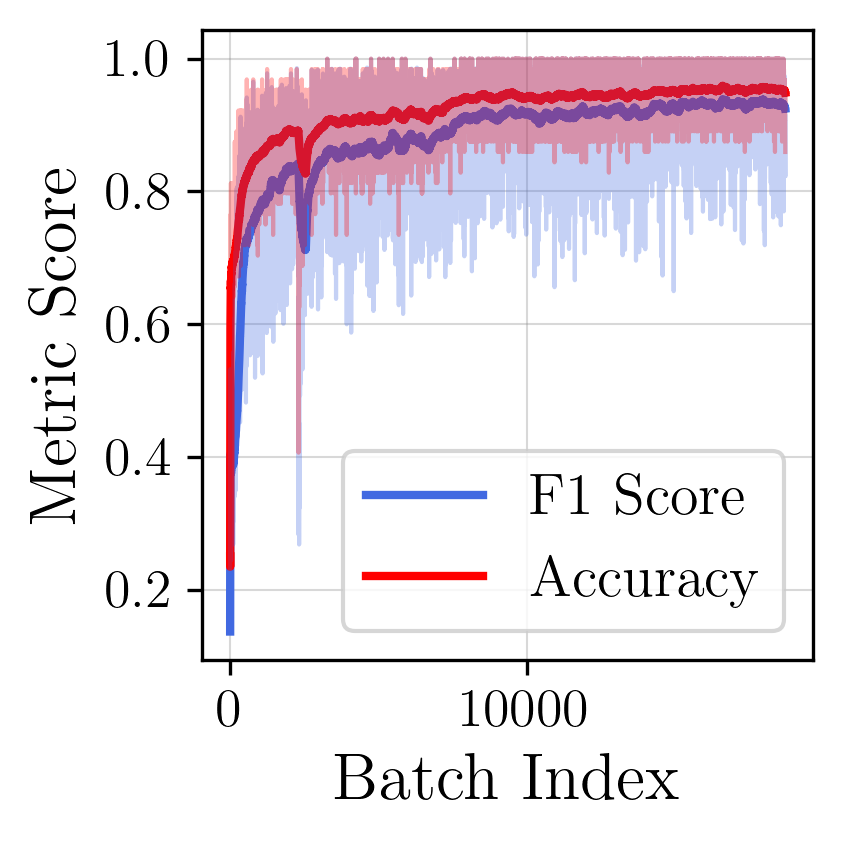}
         \caption{F1 \& Accuracy}
         \label{sfig:f1}
     \end{subfigure}
     \begin{subfigure}[t]{0.24\linewidth}
         \centering
         \includegraphics[width=\linewidth]{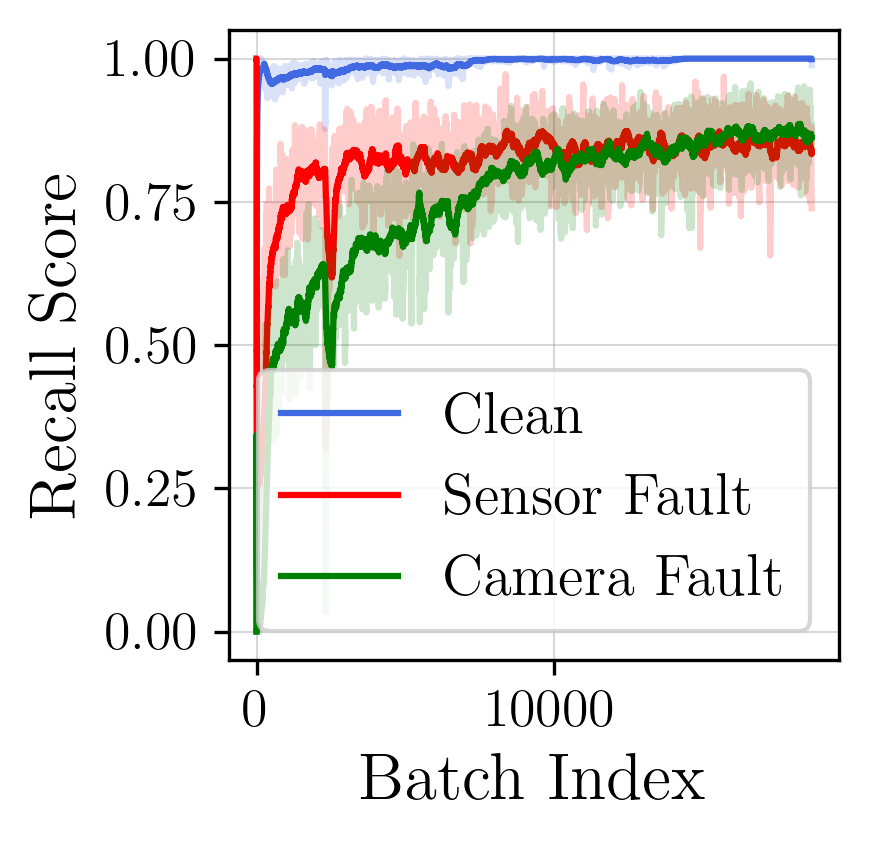}
         \caption{Class-wise Recall}
         \label{sfig:dynamic}
     \end{subfigure}
    \caption{Training dynamics of the failure detector. (a) The model converges rapidly to a stable high-performance regime. (b) While clean states are identified immediately, the model progressively improves its sensitivity to sensor and camera faults, eventually plateauing at robust recall levels.}
    \label{fig:training_dynamics}
\end{figure}

The global accuracy and F1-score in \autoref{sfig:f1} exhibit rapid convergence, stabilizing within the first 3,000 batches with decreasing variance. \autoref{sfig:dynamic} decompose this performance by class and reveals the expected hierarchy of difficulty. The detection of the clean state (blue) saturates almost immediately, whereas the fault classes require a longer adaptation phase. Notably, the Sensor Fault recall (red) and Camera Fault recall (green) rise competitively, eventually stabilizing at $\approx 85\%$. The initial volatility in the fault curves highlights the challenge of learning the separation margin $m$ in the latent space during the early contrastive phase. The aggregated confusion matrix in \autoref{fig:confusion_matrix} provides insight into the remaining error modes. 

\begin{figure}[htb]
     \centering
        \includegraphics[width=.5\linewidth]{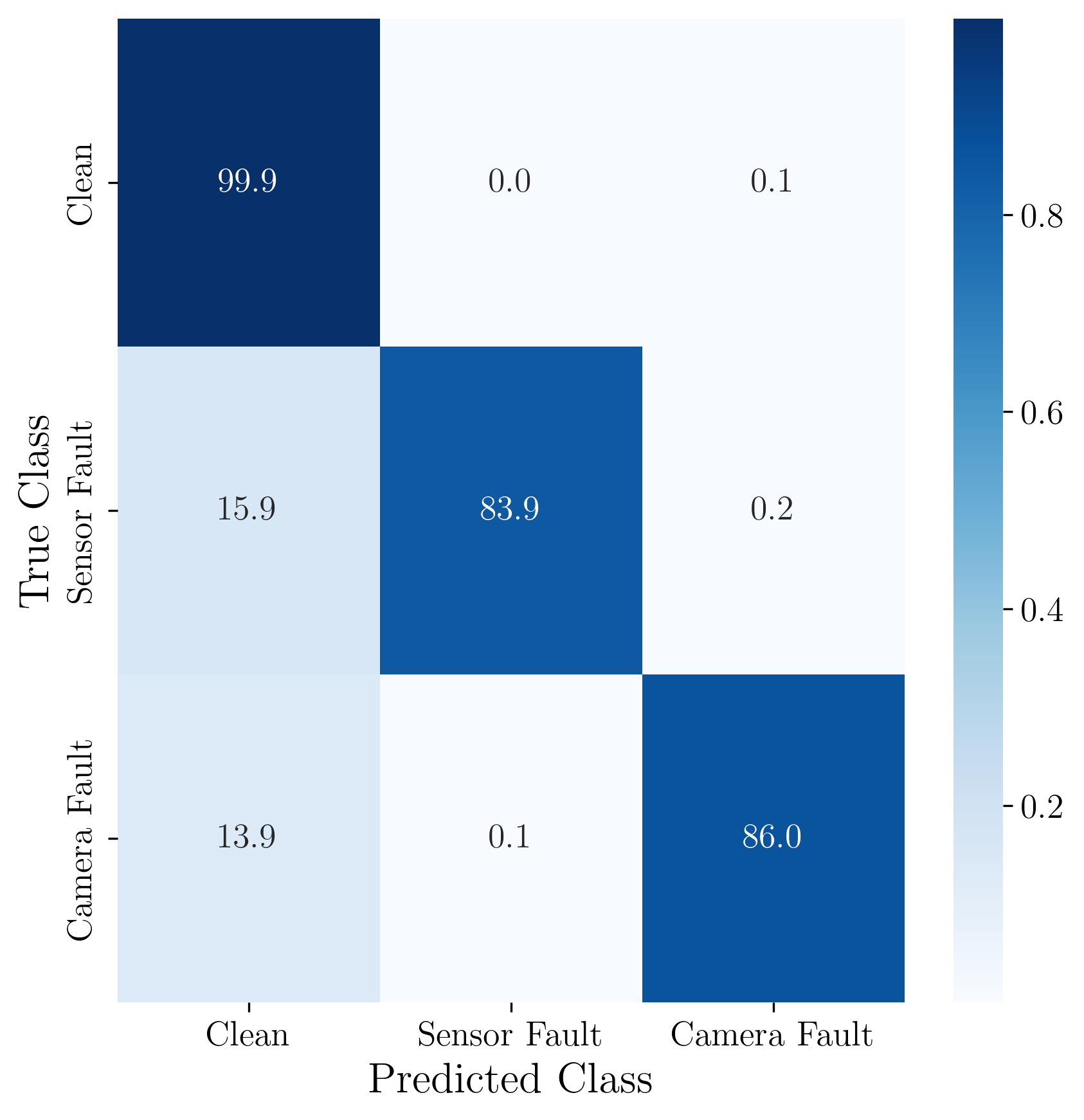}
    \caption{Normalized confusion matrix aggregated over 250 batches. The dominant diagonal confirms high classification accuracy. Off-diagonal errors are almost exclusively false negatives, identifying the detection threshold for subtle anomalies as the limiting factor.}
    \label{fig:confusion_matrix}
\end{figure}

The model exhibits perfect cross-modal separation. The confusion between sensor faults and camera faults is negligible ($<0.2\%$). This confirms that the encoder successfully maps these modalities to distinct subspaces as designed. The primary error source is the misclassification of faults as clean (False Negatives: $15.9\%$ for sensor, $13.9\%$ for camera). This suggests that certain low-magnitude perturbations fall within the natural variance of the clean data manifold or the enforced margin $m$. However, given the high precision ($>99\%$), when the model \textit{does} signal a fault, the prediction is exceptionally reliable.

\subsubsection{Failure Propagation Analysis}
\label{ssec:propagation_analysis}

To empirically validate the theoretical analysis regarding perturbation propagation derived in \autoref{sec:pertub}, we analyzed the latent space activation patterns under localized input faults. \mylemmaref{lem:dense_add} and \mylemmaref{lem:conv_add} suggest a fundamental structural difference: dense layers are expected to diffuse localized faults globally across the feature space, whereas convolutional layers should preserve spatial locality, resulting in a sparse error profile. To verify this, we injected localized block perturbations into the input and tracked the magnitude of the latent deviation $|\delta_z| = |z_{\mathrm{fault}} - z_{\mathrm{clean}}|$ across feature dimensions. \autoref{fig:heatmaps} visualizes the raw error magnitude across 150 test samples. 

\begin{figure}[htb]
     \centering
     \begin{subfigure}[b]{.48\linewidth}
         \centering
         \includegraphics[width=\linewidth]{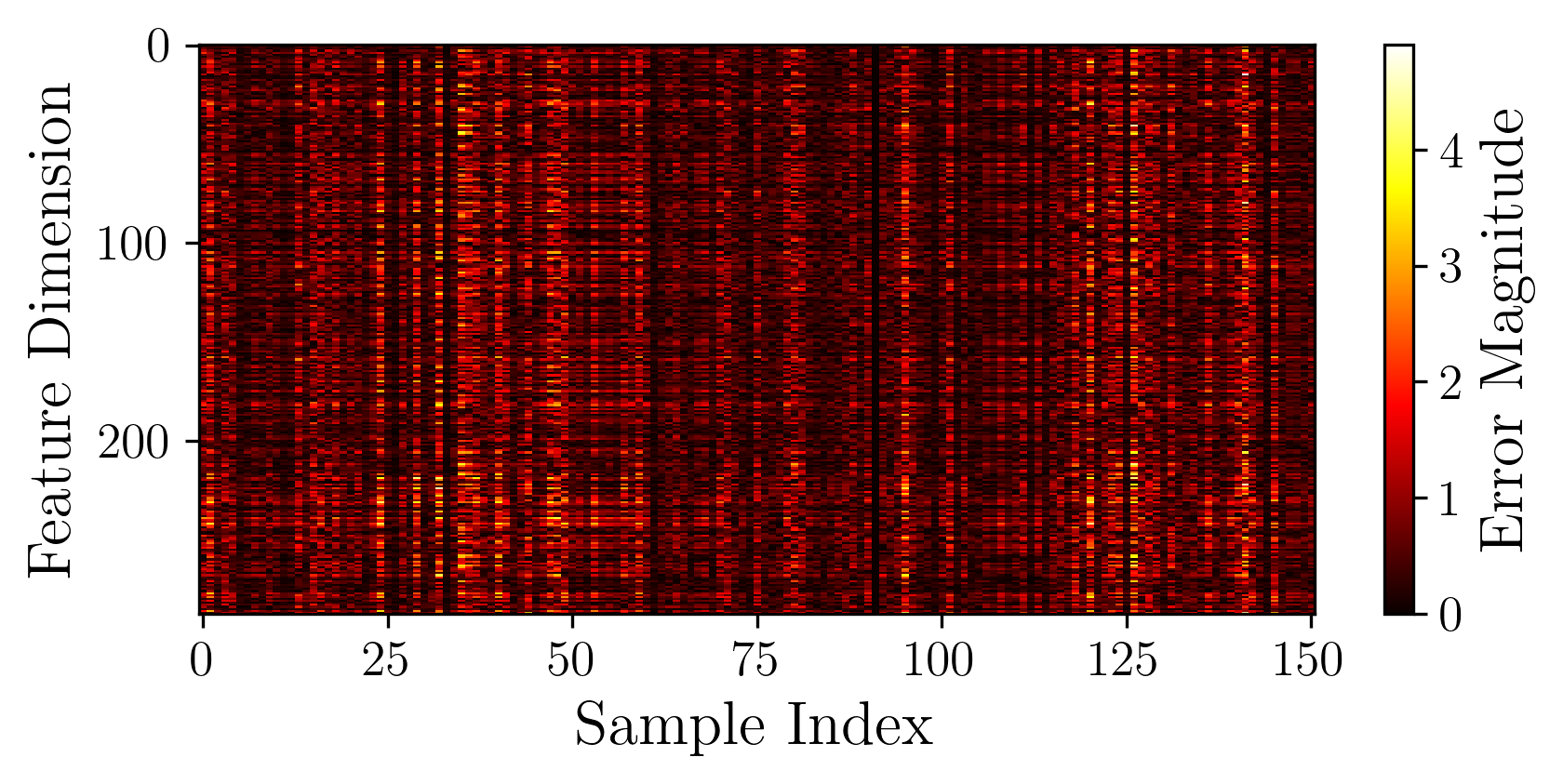}
         \caption{Fully connected}
     \end{subfigure}
     \begin{subfigure}[b]{.48\linewidth}
         \centering
         \includegraphics[width=\linewidth]{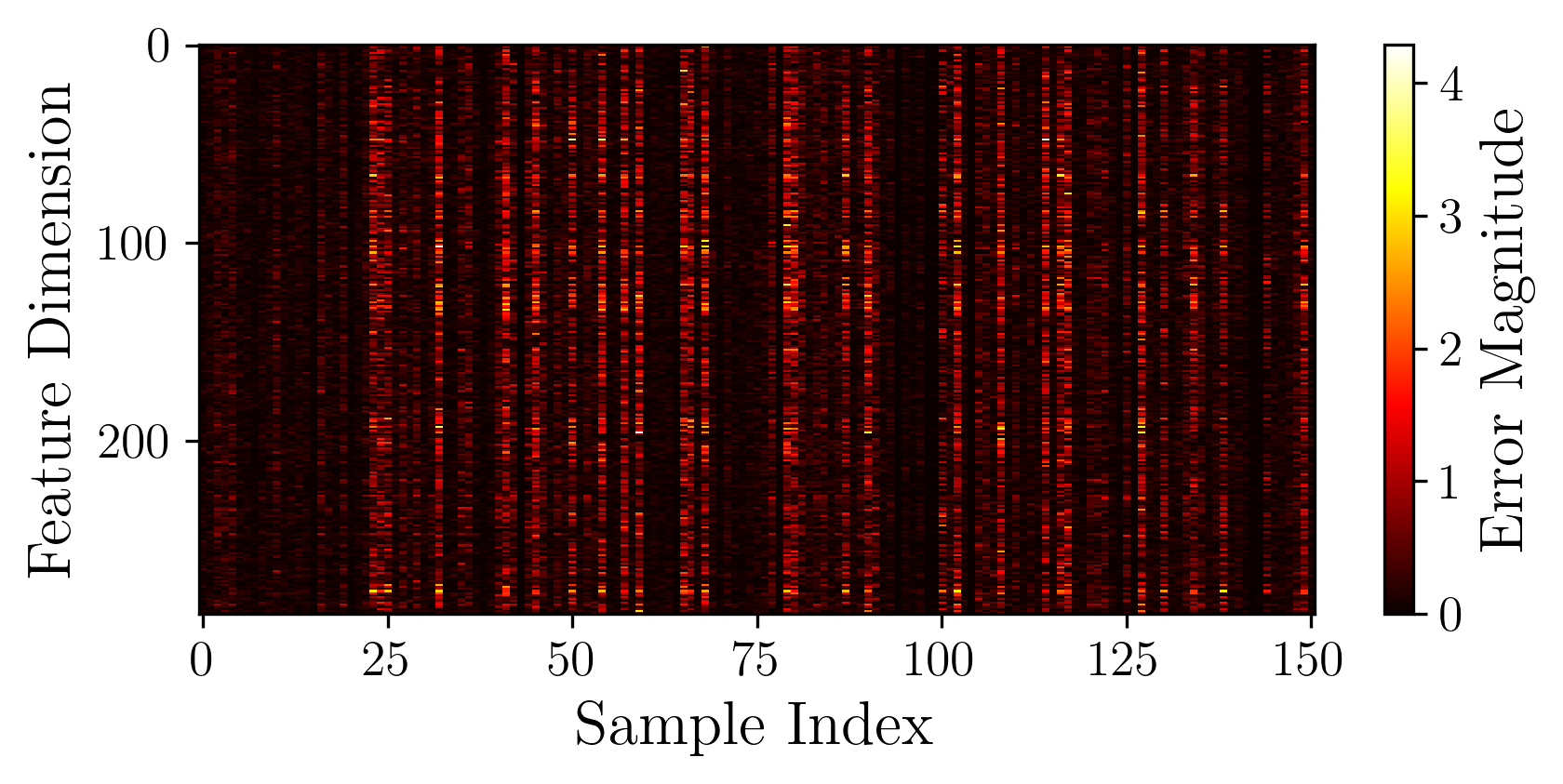}
         \caption{Convolutional}
     \end{subfigure}
    \caption{Latent error heatmaps for dense (a) and convolutional (b) encoders.}
    \label{fig:heatmaps}
\end{figure}

The dense sensor encoder exhibits a blur effect, characterized by continuous vertical lines where a single input fault activates nearly all latent dimensions simultaneously. This confirms the global mixing property of fully connected layers. In contrast, the convolutional camera encoder displays a distinct block-sparse pattern. Active error regions are separated by large zones of zero magnitude (black), indicating that the perturbation remains spatially confined to specific feature maps, preserving the geometric structure of the anomaly. To quantify this behavior, we computed the distribution of absolute error magnitudes in the latent space, shown in \autoref{fig:error_dist}. 

\begin{figure}[htb]
    \centering
    \includegraphics[width=.5\linewidth]{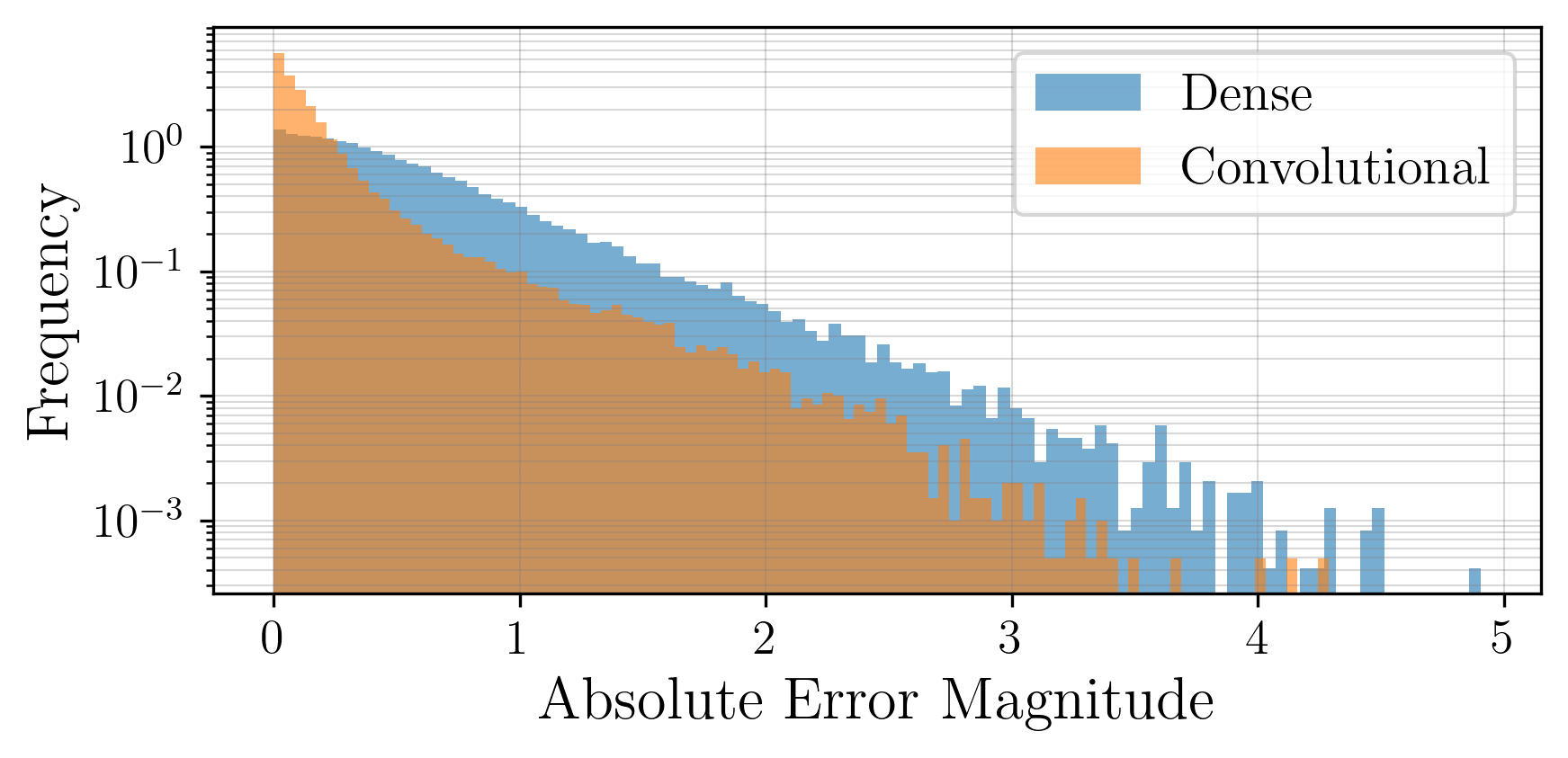}
    \caption{Log-histogram of latent error magnitudes. The convolutional encoder (orange) exhibits a sharp peak at zero, whereas the dense encoder (blue) shows a broad distribution, confirming global diffusion.}
    \label{fig:error_dist}
\end{figure}

The log-scale histogram reveals a decisive difference in the error topology. The convolutional distribution (orange) is heavily zero-inflated, featuring a dominant peak at $|\delta_z| \approx 0$ and a heavy tail. This sparsity implies that for any given localized fault, the majority of latent features remain uncorrupted, maximizing the signal-to-noise ratio for the subsequent detector. Conversely, the dense distribution (blue) lacks this zero-peak and instead follows a broad, bell-shaped distribution shifted away from zero. This confirms that the fault energy is distributed entropically across the entire latent vector, diluting the local anomaly signal. Collectively, these results empirically confirm \mylemmaref{lem:dense_add} and \mylemmaref{lem:conv_add}.

\subsection{Ablation studies}

To examine the contribution of individual components within the MMSSL framework, we conducted a series of ablation studies. Specifically, we isolate the impact of the proposed layer-wise Lipschitz regularization strategy on reconstruction stability, and we analyze the sensitivity of the model to the weighting of the self-supervised loss terms.

First, we investigate the spectral constraints imposed on the compute block. \autoref{tab:ablation} compares the performance of the full MMSSL model against the baseline (w/o Reg) where the spectral normalization, gradient clipping, and amplification mechanisms described in \autoref{sec:method} are removed.

\begin{table}[htb]
\centering
\caption{Train and test performances (in $10^{-2}$) of the proposed SSL approach with and without regularizer. Values represent the mean over the final 250 batches of 10 trials for camera, sensor, and both combined correction performance.}
\label{tab:ablation}
\begin{tabular}{l c c c c }
\toprule
\textbf{Modality} & \textbf{w/o Reg} & \textbf{MMSSL} \\
\midrule
& \multicolumn{2}{c}{\textbf{Training}} \\
\midrule
Camera   & 0.127 $\pm$ 0.024 & \bestvalue{0.098 $\pm$ 0.023} \\
Sensor   & 0.362 $\pm$ 0.349 & \bestvalue{0.307 $\pm$ 0.302} \\
Combined   & 0.490 $\pm$ 0.357 & \bestvalue{0.405 $\pm$ 0.310} \\
\midrule
& \multicolumn{2}{c}{\textbf{Testing}}\\ 
\midrule
Camera   & 0.147 $\pm$ 0.003 & \bestvalue{0.118 $\pm$ 0.005} \\
Sensor   & 0.499 $\pm$ 0.047 & \bestvalue{0.405 $\pm$ 0.040} \\
Combined   & 0.645 $\pm$ 0.046 & \bestvalue{0.523 $\pm$ 0.042} \\
\bottomrule
\end{tabular}
\end{table}

The unregularized model consistently converges to a suboptimal plateau, yielding a significantly higher combined test error ($0.645$) compared to MMSSL ($0.523$). This $\approx 19\%$ reduction in error demonstrates that the Lipschitz constraints do more than just stabilize training. They actively shape the optimization landscape. By enforcing a contractive mapping, the regularizer guides the network away from easy-to-reach but poor local minima and towards a more physically accurate correction manifold. This effect is consistent across modalities, with the camera error dropping from $0.147$ to $0.118$ and the sensor error from $0.499$ to $0.405$, proving that the geometric constraints are essential for maximizing reconstruction fidelity.

To quantify the trade-off between the two self-supervised objectives, \autoref{fig:ablation_tradeoff} presents the impact of the weighting ratio $\lambda_{sim} / \lambda_{con}$ on system performance. The dual-axis plot visualizes the detection accuracy (orange, right axis) against the correction loss (blue, left axis).

\begin{figure}[htb]
    \centering
    \includegraphics[width=.5\linewidth]{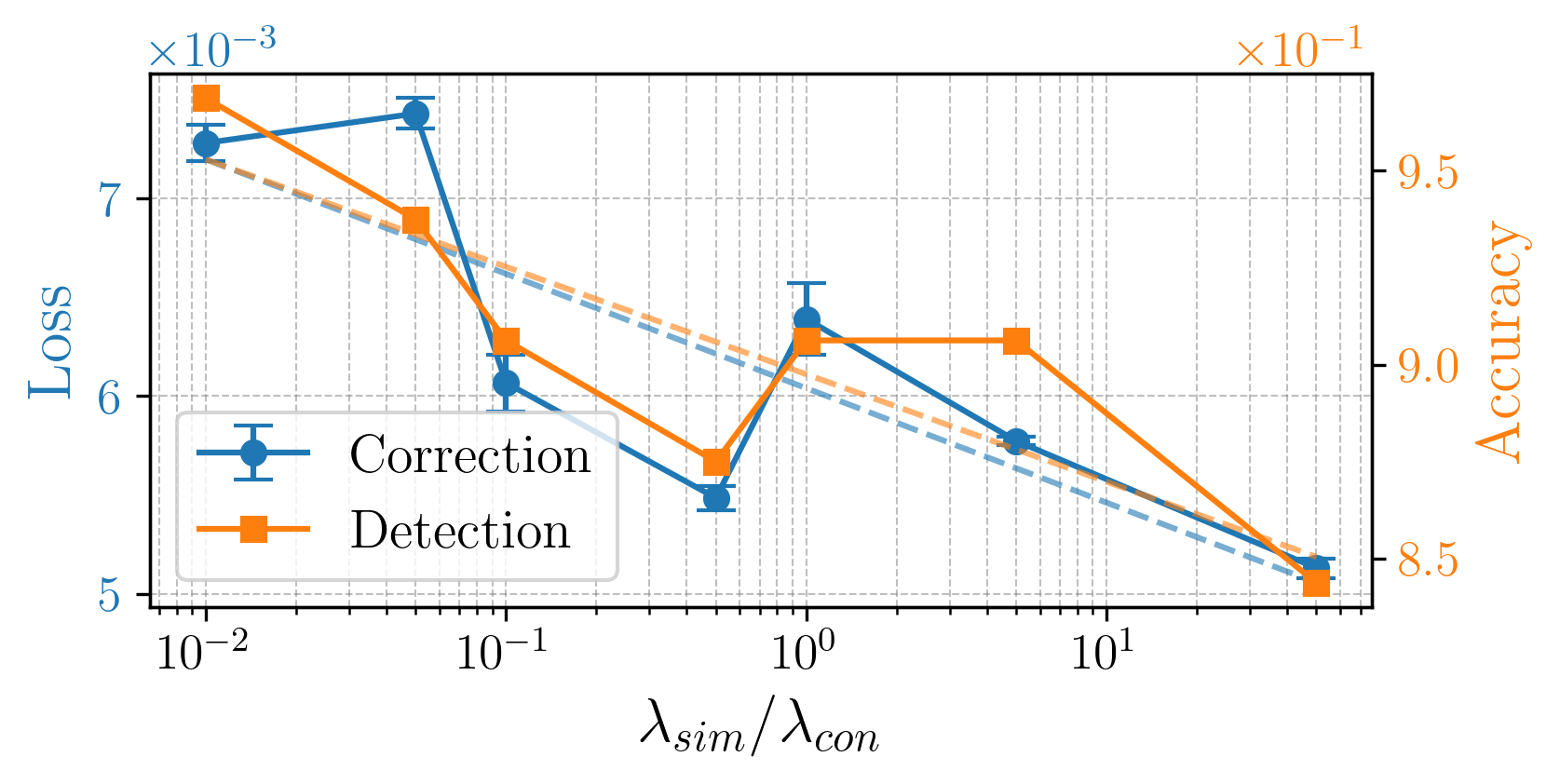}
    \caption{Dual-axis plot showing the Correction Loss (left y-axis, blue, scale $\times 10^{-3}$) and Detection Accuracy (right y-axis, orange, scale $\times 10^{-1}$) as a function of the $\lambda_{sim}/\lambda_{con}$ ratio (x-axis, logarithmic scale). The blue line with error bars represents the correction loss, and the orange line with square markers represents the detection accuracy.}
    \label{fig:ablation_tradeoff}
\end{figure}

The results reveal a clear inverse relationship between the competing regularizers. In the contrastive-dominant regime ($\lambda_{sim} \ll \lambda_{con}$), detection accuracy peaks at $98\%$ as the model enforces strong latent separation, but this distorts the manifold geometry, resulting in higher correction errors. Conversely, as the ratio increases to favor similarity ($\lambda_{sim} \gg \lambda_{con}$), the correction loss drops to its minimum ($5.1 \times 10^{-3}$) due to stronger contraction, but this smoothing collapses the anomaly boundary, degrading detection accuracy to $84.5\%$. A balanced ratio ($\approx 10^0$) offers the optimal compromise, maintaining high discriminative power ($>91\%$) while stabilizing the reconstruction.

To visualize the internal optimization dynamics of the self-supervised framework, \autoref{fig:sim_con_ablation} decomposes the total objective into its constituent parts. The dual-axis plot tracks the raw, unweighted values of the similarity loss (blue, left axis) and the contrastive loss (red, right axis) as a function of their weighting ratio $\lambda_{sim} / \lambda_{con}$.

\begin{figure}[htb]
    \centering
    \includegraphics[width=.5\linewidth]{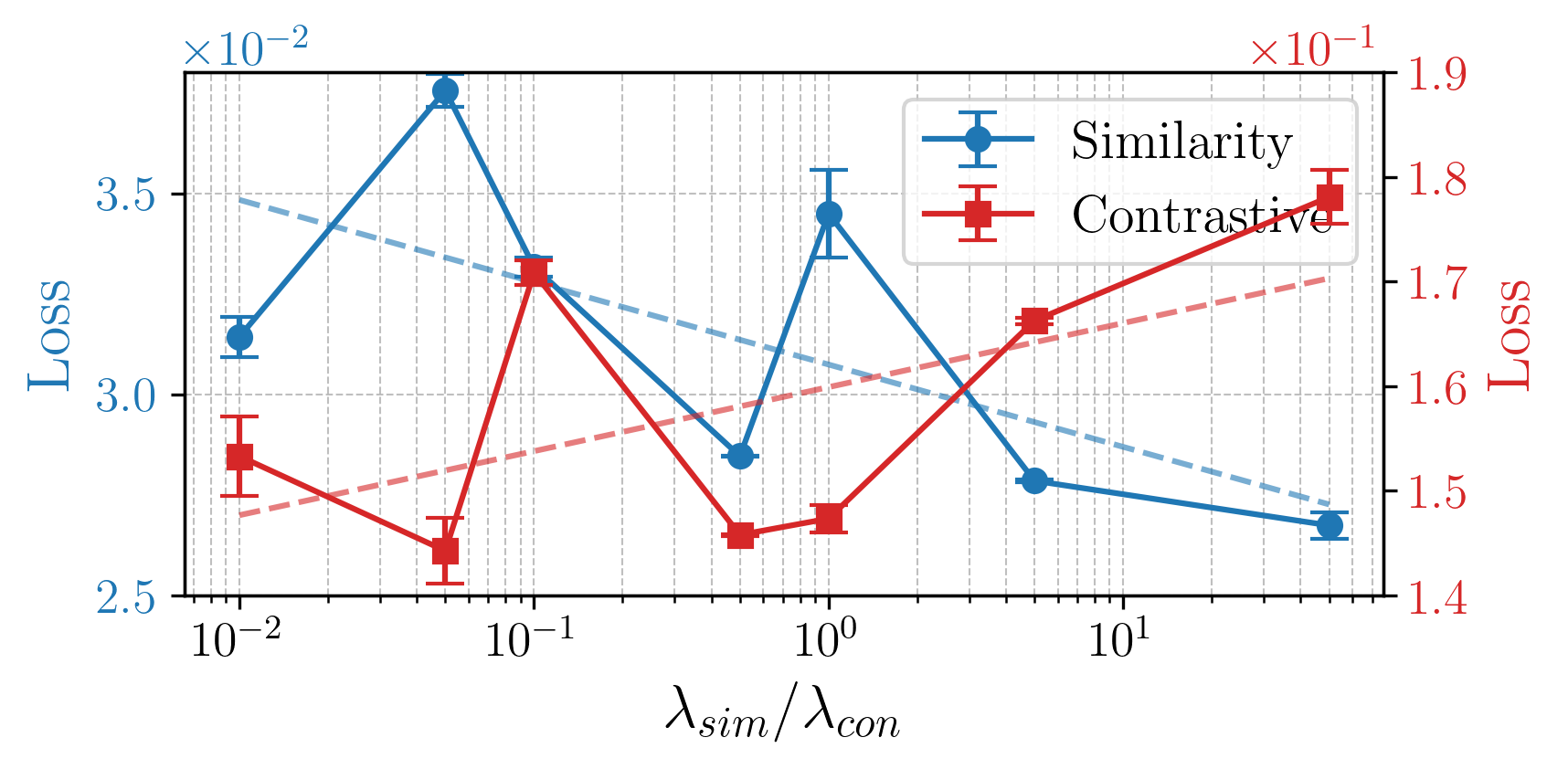}
    \caption{Pareto trade-off between the unweighted self-supervised loss components. As the weighting ratio $\lambda_{sim} / \lambda_{con}$ increases, the similarity loss (blue) decreases, indicating better reconstruction, while the contrastive loss (red) rises, indicating reduced latent separability. The dashed lines illustrate the opposing linear trends of these competing objectives.}
    \label{fig:sim_con_ablation}
\end{figure}

The empirical results demonstrate a fundamental opposition between the two geometric objectives. As the ratio increases (moving right), the system prioritizes similarity, driving the unweighted similarity loss down from a peak of $\approx 3.8 \times 10^{-2}$ to a minimum of $\approx 2.7 \times 10^{-2}$. This confirms that stronger regularization effectively forces the latent manifold to contract. However, this contraction comes at a direct cost to separability: the unweighted contrastive Loss rises steadily from $\approx 1.5$ to $\approx 1.8 \times 10^{-1}$ (red trend line), indicating that the embeddings are becoming less distinguishable. The intersection of the trend lines near $\lambda_{sim} / \lambda_{con} \approx 10^0$ highlights a critical equilibrium point where neither objective dominates, allowing the model to minimize reconstruction error without completely collapsing the latent boundaries required for detection.

To specifically assess the robustness of the anomaly detector under varying regularization regimes, \autoref{fig:acc_f1_ablation} details the evolution of classification metrics. The dual-axis plot displays the overall accuracy (blue, left axis) and the class-balanced F1 score (green, right axis) as a function of the loss weighting ratio $\lambda_{sim} / \lambda_{con}$.

\begin{figure}[htb]
    \centering
    \includegraphics[width=.5\linewidth]{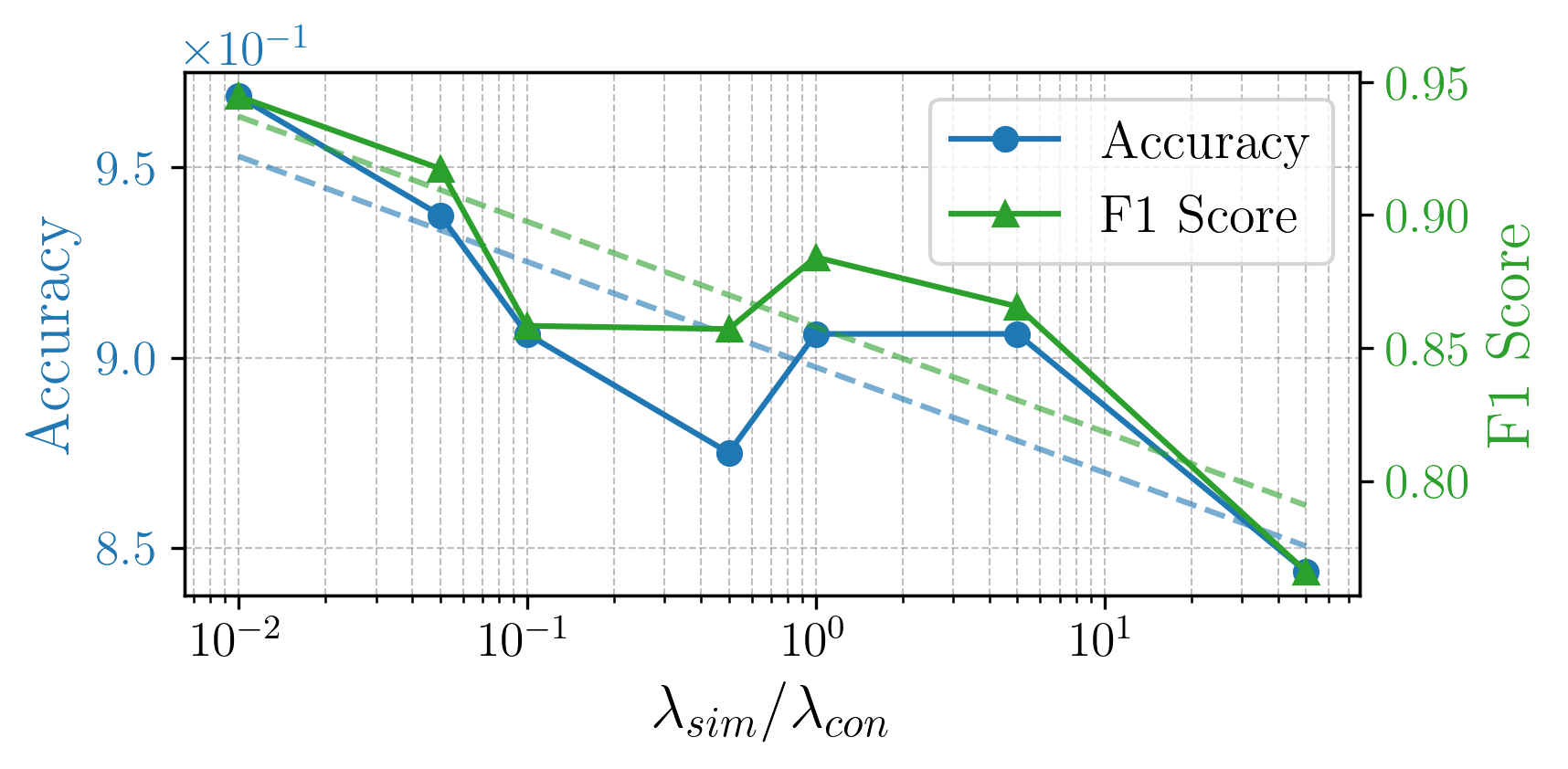}
    \caption{Dependence of detection performance on the loss weighting ratio. Both accuracy (blue) and F1 score (green) degrade as the similarity weight increases, confirming that strong contrastive regularization is essential for maintaining high discriminative power.}
    \label{fig:acc_f1_ablation}
\end{figure}

The results exhibit a consistent downward trend that mirrors the contrastive loss behavior observed in the previous analyses. Performance is maximized in the contrastive-dominant regime ($\lambda_{sim} = 10^{-2}$), where the F1 score reaches nearly $0.95$ and accuracy peaks at $97\%$. This confirms that a strong margin constraint is the primary driver for separability. As the system shifts towards a balanced weighting ($\lambda_{sim} / \lambda_{con} \approx 10^0$), there is a slight dip, yet the model maintains robust performance with an F1 score of $\approx 0.88$ and accuracy $>90\%$. This stability plateau at unity is critical, as it allows the system to support accurate failure correction (as seen in the previous figures) without catastrophic degradation in detection. However, as the similarity term dominates ($\lambda_{sim} \ge 10$), the F1 score plummets below $0.80$, indicating that the detector can no longer reliably distinguish between the heavily smoothed anomaly representations and the clean manifold.


\section{Conclusion}
\label{sec:conclusion}

This paper presented a theoretically motivated approach for robust multimodal learning under partial or corrupted input conditions. By linking perturbation propagation in convolutional and dense layers to their expected Lipschitz behavior, we established a formal basis for selecting architectural components that either localize or diffuse input anomalies. Building on these findings, we developed a two-stage self-supervised model consisting of a pre-trained multimodal autoencoder and a corrective compute block operating in latent space. The method integrates contrastive alignment for anomaly identification with linear reconstruction for correction, while employing selective Lipschitz regularization and gradient clipping to control sensitivity at different depths of the network. The results demonstrate that balancing sensitivity and contraction across layers yields architectures that can both detect and correct faults, a key requirement for reliable multimodal systems in industrial and autonomous settings.

Future research will focus on extending this framework in three directions. First, integrating adaptive Lipschitz control through dynamic spectral normalization could enable online sensitivity modulation during deployment. Second, incorporating probabilistic uncertainty estimation into the correction block may improve interpretability and safety.

\bibliographystyle{unsrt}
\bibliography{references.bib}
\end{document}